\newtheorem{thmprop}{Proposition}
\newtheorem{thmthm}{Theorem}
\newtheorem*{thmex*}{Example}
\newtheorem*{thmprop*}{Proposition}
\newtheorem*{thmthm*}{Theorem}
\newtheorem{thmcond}{Condition}
\newenvironment{proofsketch}{%
\proof}{\endproof}
\newcommand{\suml}{\sum\limits}
\newcommand{\nwl}{\nonumber\\}
\newcommand{\bartheta}{\bar{\theta}}
\def\E{\mathbb{E}}
\def\cM{\mathcal{M}}
\def\cX{\mathcal{X}}
\def\cF{\mathcal{F}}
\def\na{{\tt NA}}
\def\ds1{\mathds{1}}
\def\bbR{\mathbb{R}}
\def\na{{\tt NA}}
\def\obs{{\lnot{}m}}
\def\Obs{{\lnot{}M}}
\def\miss{m}
\def\obsi{{\lnot{}m^{(i)}}}
\def\spsm{{\tt SPSM}}
\def\ridge{{\tt Ridge}}
\def\psm{{\tt PSM}}
\def\mlp{{\tt MLP}}
\def\lr{{\tt LR}}
\def\xgb{{\tt XGB}}
\def\lasso{{\tt LASSO}}
\title{Sharing Pattern Submodels for Prediction with Missing Values}
\author{Lena Stempfle, Ashkan Panahi, Fredrik Johansson}
\date{Department of Computer Science and Engineering (CSE), Chalmers University of Technology, Sweden\\ \href{mailto:stempfle@chalmers.se}{stempfle@chalmers.se} \href{mailto:ashkan.panahi@chalmers.se}
{ashkan.panahi@chalmers.se}  \href{mailto:fredrik.johansson@chalmers.se}
{fredrik.johansson@chalmers.se}}
\begin{document}
\maketitle

\begin{abstract}
Missing values are unavoidable in many applications of machine learning and present challenges both during training and at test time. When variables are missing in recurring patterns, fitting separate pattern submodels have been proposed as a solution. However, fitting models independently does not make efficient use of all available data. Conversely, fitting a single shared model to the full data set relies on imputation which often leads to biased results when missingness depends on unobserved factors. We propose an alternative approach, called sharing pattern submodels, which i) makes predictions that are robust to missing values at test time, ii) maintains or improves the predictive power of pattern submodels, and iii) has a short description, enabling improved interpretability. Parameter sharing is enforced through sparsity-inducing regularization which we prove leads to consistent estimation. Finally, we give conditions for when a sharing model is optimal, even when both missingness and the target outcome depend on unobserved variables. Classification and regression experiments on synthetic and real-world data sets demonstrate that our models achieve a favorable tradeoff between pattern specialization and information sharing.
\end{abstract}

\section{Introduction}\label{sec:intro}
Machine learning models are often used in settings where model inputs are partially missing either during training or at the time of prediction~\citep{rubin1976inference}. If not handled appropriately, missing values can lead to increased bias or to models that are inapplicable in deployment without imputing the values of unobserved variables~\citep{liu2020robust,morvan2020neumiss}. When missingness is dependent on unobserved factors that are related also to the prediction target, the fact that a variable is unmeasured can itself be predictive---so-called \emph{informative missingness}~\citep{rubin1976inference, marlin}. Often, imputation of missing values is insufficient, and it can be beneficial to let models make predictions based on both the partially observed data and on indicators for which variables are missing~\citep{jones1996indicator, Gronwold2012}.
As mentioned in~\citet{lemorvan20a_linear}, even the linear model---the simplest
of all regression models---has not yet been thoroughly investigated with missing values and still reveals unexpected challenges.

\begin{figure}[t]
 \centering
  \includegraphics[width=0.9\columnwidth]{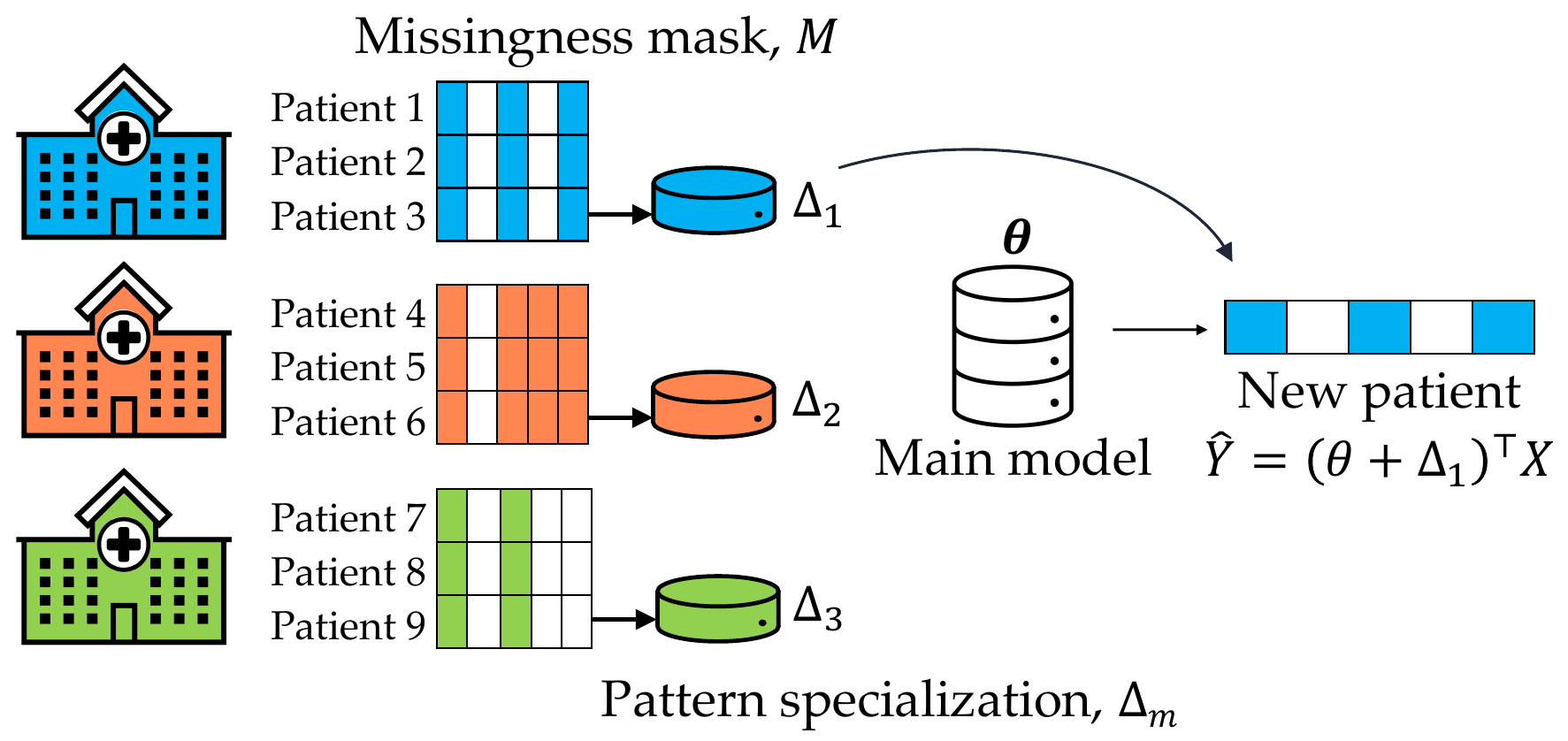}
  \caption{Coefficient sharing between a main model $\theta$ and pattern submodels for three clinics with different patterns in missing values. Without specialization,  $\Delta_m$, an average prediction shared by clinics with different patterns may not lead to an optimal solution for any of them. Conversely, fitting separate models for each clinic does not use all of the available data efficiently and leads to high variance.}\label{fig:example}
\end{figure}

\emph{Pattern missingness} emerges in data generating processes (DGPs) where there are structural reasons for which variables are measured---samples are grouped by recurring patterns of measured and missing variables~\citep{little1993pattern}. In Figure~\ref{fig:example}, we illustrate an example of this when observing patients from three different clinics, each systematically collecting slightly different measurements. Assume for simplicity that the pattern of missing values is unique to each clinic. In this way, a pattern-specific model is also site-specific. 

\emph{Pattern submodels} have been proposed for this setting, fitting a separate model to samples from each pattern~\citep{fletcher2020missing,marshall2002prospective}. This solution does not rely on imputation and can improve interpretability over black-box methods~\citep{rudin2019stop}, but can suffer from high variance, especially when the number of distinct patterns is large and the number of samples for a given pattern is small. Moreover, if the fitted models differ significantly between patterns, it may be hard to compare or sanity-check their predictions. Notably, pattern submodels disregard the fact that the prediction task is shared between each pattern.
However, in the context of Figure~\ref{fig:example}, using a shared model for all clinics may also be suboptimal if clinics take different measurements, or treat patients differently (high bias). 

We propose the \emph{sharing pattern submodel} (\spsm{}) in which submodels for different missingness patterns share coefficients while allowing limited specialization. This encourages efficient use of information across submodels leading to a beneficial tradeoff between predictive power and variance in the case where similar submodels are desired and sample sizes per pattern are small. Additionally, models with few and small differences between patterns are easier for domain experts to interpret.

We describe \spsm{} in Section~\ref{sec:SPSM}, and we prove that in linear-Gaussian systems, a model which shares coefficients between patterns may be optimal---even when the prediction target depends on missing variables and on the missingness pattern (Section 4). Finally, we find in an experimental evaluation on real-world and synthetic data that \spsm{} compares favorably to baseline classifiers and regression methods, paying particular attention to how \spsm{} boosts sample efficiency and model sparsity (Section~\ref{sec:exp}).

\section{Prediction with Test-Time Missingness}\label{sec:prob}
Let $X = [X_1, ..., X_d]^\top$ be a vector of $d$ random variables taking values in $\cX \subseteq \mathbb{R}^d$,  
and $M  = [M_1, ..., M_d]^\top$ be a random missingness mask in $\cM \subseteq \{0,1\}^d$  where
$M_j = 1$ indicates that variable $X_j$ is missing. Next, let $\tilde{X} \in
(\mathbb{R} \cup \{\na\})^d$ be the mixed observed-and-missing values of $X$ according to $M$ and define 
$
X_\Obs = [X_{j} : M_j=0]^\top \in \mathbb{R}^{d-\|M\|_1}
$
to be the vector of \emph{observed} covariates under $M$. The outcome of interest, $Y
\in \mathbb{R}$, may depend on all of $X$, observed or missing, as well as on $M$. Let $k = |\cM|$ denote the number of possible missingness patterns.\footnote{In practical scenarios, we expect $k$ to be much smaller than the worst-case number, $2^d$.} Further, assume that variables $X, M, Y$ are distributed according to a \emph{fixed, unknown} joint distribution $p$. The assumed (causal) dependencies of the variables used, coincide most closely with \emph{selection missingness}~\citep{little1993pattern} (Figure~\ref{fig:DAG} in the appendix).

Our goal is to predict $Y$ \emph{under missingness} $M$ in $X$ using functions $f : (\mathds{R} \cup
\{\na\})^d
\rightarrow \mathds{R}$. We aim to minimize risk with respect to the  squared loss on $p$,
\begin{equation}\label{eq:risk}
    \min_{f} R(f), \mbox{ where } R(f) \coloneqq \E_{\tilde{X},Y\sim p}[(f(\tilde{X}) - Y)^2]~.
\end{equation}
Under the assumption that $Y$ has centered, additive noise, 
\begin{equation}\label{eq:main_dgp}
Y = g(X, M) + \epsilon\;\mbox{ where }\; \E[\epsilon]=0,
\end{equation}
the Bayes-optimal predictor of $Y$ is  $f^* = \E[Y \mid X_\Obs, M]$.
In general, observed values $X_\Obs$ are insufficient for predicting $Y$; $f^*$ may depend directly on the mask $M$, \emph{even if $Y$ does not depend directly on $M$}~\citep{morvan2021s}. 

A common strategy to learn $f$ is to first impute the missing values in $\Tilde{X}$ and then fit a model on the observed-or-imputed covariates $X^I \in \mathbb{R}$---so-called \emph{impute-then-regress} estimation. Even though imputation is powerful, it is not always optimal under test-time missingness~\citep{morvan2021s} and often assumes that data is missing at random (MAR)~\citep{carpenter2012multiple, seaman2013meant}.
\subsection{Pattern Submodels}\label{subsec:PSM}
In cases where the number of distinct missingness patterns $k$ is small, it is possible to learn separate predictors $f_m$ for each  pattern. This idea has been called \emph{pattern submodels} (PSM)~\citep{fletcher2020missing,marshall2002prospective}, a set of models which aim to minimize the empirical risk under each missingness pattern. 
Let $D = \{(\tilde{x}^{(1)}, m^{(1)}, y^{(1)}), ..., (\tilde{x}^{(n)}, m^{(n)}, y^{(n)}) \}$ be a data set of $n$ samples, with partially observed features $\tilde{x}^{(i)}$, corresponding to missingness patterns $m^{(i)}$, drawn independently and identically distributed from $p$. PSM may be learned by minimizing the regularized empirical risk, 
\begin{equation}
\min_{\{f_m\} \in \cF^k} \;\; \frac{1}{n}
\sum_{i=1}^n L(f_{m^{(i)}}(\tilde{x}^{(i)}), y^{(i)}) + \sum_{m\in \cM} \mathcal{R}(f_m)
\label{eq:submodel}
\end{equation}
over a suitable class of models $\cF$ and regularization $\mathcal{R}$. \citet{fletcher2020missing} considered linear and logistic regression models, 
$f_m = \sigma(\theta_m^\top x)$ with $\sigma$ either the identity or logistic function and loss $L$ chosen to match. The objective in~(\ref{eq:submodel}) is separable in $m$ and can be solved independently for each pattern. However, this often leads to high variance in the small-sample regime since each pattern accounts for only a subset of the available samples. Without structural assumptions, the number of patterns $k$ grows exponentially with $d$ (see discussion in Section~\ref{sec:related}).

PSM allows for prediction under test-time missingness which adapts to the pattern $m$ without relying on imputation or assumptions on missingness mechanisms like MAR. However, the prediction target (and the Bayes-optimal model $f^*$) may have only a small dependence on the  pattern $m$; \emph{the optimal submodels for all $m$ may share significant structure}. Next, we propose estimators that exploit such structures to reduce variance and increase interpretability. 

%
% SHARING PATTERN SUBMODELS
%
\section{Sharing Pattern Submodels}\label{sec:SPSM}
We propose \textit{sharing pattern submodels} (\spsm{}), linear prediction models, specialized for patterns in variable missingness, which share information during learning. Sharing is accomplished by regularizing submodels towards a main model and solving the resulting coupled optimization problem. While linear models are limited in expressive power, they are often found to be useful approximations of nonlinear functions due to their superior interpretability.
%In general, we do not make assumptions on the functional form of $Y$ or the distribution of the noise $\epsilon$; linear models are often found to be useful approximations of nonlinear functions. In the case where the DGP is linear as well, we prove consistency of our approach (Section 4). 

\paragraph{Fitting SPSM} Let $\theta \in \mathbb{R}^d$ represent \emph{main model}  coefficients used in prediction under all missingness patterns, and define $\theta_\obs = [\theta_j : m_j = 0]^\top \in \mathbb{R}^{d_m}$ to be the subset of coefficients corresponding to variables observed under $m$. To emphasize, $\theta_\obs$ depends only on $m$ in selecting a subset of $\theta$---the coefficients are shared across patterns. Similarly, define $\Delta_\obs \in \mathbb{R}^{d_m}$ to be \emph{pattern-specific specialization} of these coefficients to $m$. In contrast to $\theta_\obs$, the values of $\Delta_\obs$ are unique to each pattern $m$. Note, a model $f_m$ depends only on the observed components of $X$. In regression tasks, we learn \textbf{sharing} pattern submodels on the form
\begin{equation}\label{eq:spsm_form}
f_m(x) \coloneqq (\theta_\obs + \Delta_\obs)^\top x_\obs,  \;\;\mbox{for all}\;\; m \in \cM
\end{equation}
by solving the following problem with $\lambda_m \geq 0$ and $\gamma \geq 0$,
\begin{align}\label{eq:spsm_ols}
    \underset{\theta, \{\Delta_\obs\}}{\text{minimize}}\;\;  & \frac{1}{n}\sum_{i=1}^n \big((\theta_\obsi + 
     \Delta_\obsi)^\top x^{(i)}_\obsi - y^{(i)} \big)^2  \nonumber \\ & + \frac{\gamma}{n} \|\theta\| +  \sum_{m \in \mathcal{M}}
     \frac{\lambda_m}{n_m} \|\Delta_\obs\|_1~.
\end{align}
where $n_m$ is the number of samples of pattern $m$. $\lambda_m>0$ and $\gamma>0$ are regularization parameters. Intercepts (pattern-specific and shared) are left out for brevity.  The optimization problem is convex, and we find optimal values for $\theta$ and $\Delta_m$ using L-BFGS-B~\citep{byrd1995limited} in experiments. In classification tasks, the square loss is replaced by the logistic loss. In either case, we call the solution to \eqref{eq:spsm_ols} \spsm{}.

For the penalty $\|\theta\|$, we use either the $\ell_1$ or $\ell_2$ norm to tradeoff bias and variance in the main model. A high value for $\lambda_m$ regularizes the specialization of model coefficients to missingness pattern $m$ such that high $\lambda_m$ encourages smaller $\|\Delta_m\|_1$ and greater coefficient sharing. In experiments, we let $\lambda_m$ take the same value $\lambda$ for all patterns. $\ell_1$-regularization is used for $\Delta$ as we aim for a sparse solution where the majority of specialization coefficients are zero. 

\paragraph{Consistency} For fixed $\lambda, \gamma$, sums of the minimizers of \eqref{eq:spsm_ols}, $\theta_\obs^* + \Delta_\obs^*$, converge to the best linear approximations of the Bayes-optimal predictors $f_m^*$ for each pattern $m$ in the large-sample limit.
We state this formally and sketch a proof in Appendix~\ref{app:consistency} using standard arguments. This result is agnostic to parameter sharing; $\Delta^*$ may not be sparse. In Section~\ref{sec:analysis}, we prove that, in the linear-Gaussian setting, our method also recovers the sparsity of the true process. In the large-sample limit, this may not be beneficial for variance reduction, but sparsity contributes to interpretability.

\paragraph{Why is SPSM Interpretable?}
Comparing pattern specializations allows domain experts to reason about how similar submodels are, and how they are affected by missing values. We argue that a set of submodels is more interpretable if specializations contain fewer non-zero coefficients, $\Delta_\obs$ is sparse. Sparsity is a generally useful measure of interpretablity~\citep{rudin2019stop}, since it results in only a subset of the input features affecting predictions, reducing the effective complexity of the model~\citep{miller1956magical, cowan2010magical}.
%
% ANALYSIS
%
\section{Optimality of Sharing Models}\label{sec:analysis}%
In this section, we give conditions under which an optimal pattern submodel has sparse specializations (shares parameters between patterns) and when \spsm{} converges to such a model in the large-sample limit. 
We analyze DGPs where the outcome $Y$ depends linearly on \emph{all} components of $X$ (\emph{models} have access only the observed subset of these) and on the pattern $M$, but not on interactions between $X$ and $M$,
\begin{equation}\label{eq:main_dgp_lin}
Y = \theta^\top X + \alpha_M + \epsilon~, \mbox{ with } \epsilon \sim \mathcal{N}(0, \sigma_Y^2).
\end{equation}
Here, $\alpha_M$ is a pattern-specific intercept. Without $\alpha_M$, this is a setting often targeted by imputation methods, since the outcome is a parametric function of the full $X$. However, we know that $X$ will be partially missing also at test time, and $M$ is allowed to have arbitrary dependence on $X$. In this case, imputation need not be necessary or sufficient. 

Next, we study this setting with Gaussian $X$, where we can precisely characterize optimal models and their sparsity.

\subsection{Sparsity in Linear-Gaussian DGPs}\label{sec:linear_gaussian}
Recall that $X_{\obs}$ and $\theta_{\obs}$ denote covariates and coefficients restricted to \emph{observed} variables under pattern $m$, and define $X_\miss$ and $\theta_\miss$ analogously for missing variables. 
For outcomes which obey \eqref{eq:main_dgp_lin}, the Bayes-optimal model under $m$ is 
\begin{align}\label{eq:pre_derivation}
        & \mathbb{E}[Y \mid X_\obs, M=m]={\theta_\obs}^\top X_\obs + \xi_\miss
\end{align}
where 
$
\xi_\miss =
{\theta_{m}}^\top \mathbb{E}_{X_{m}}[ X_{m} \mid X_\obs] + \alpha_m 
$ 
is the bias of the na\"{i}ve prediction made using the coefficients $\theta_\obs$ of the true system but restricted to observed variables. Ignoring $\xi_\miss$ coincides with performing prediction following  zero-imputation and is biased in general. $\xi_\miss$ thus captures the specialization required for pattern submodels to be unbiased. For closer analysis, we study the following setting.
\begin{thmcond}[Linear-Gaussian DGP]\label{cond:lineargauss}%
Covariates $X = [X_1, ..., X_d]^\top$ are Gaussian, $X \sim \mathcal{N}(\boldsymbol{\mu}, \Sigma)$ with mean $\boldsymbol{\mu}$ and covariance matrix $\Sigma$. The outcome $Y$ is linear-Gaussian as in \eqref{eq:main_dgp_lin} with parameters $(\theta, \{\alpha_m\}, \sigma_Y)$. $M$ is arbitrary.%
\end{thmcond}
In line with Condition~\ref{cond:lineargauss}, let $\Sigma_{\obs,\miss}$ be the submatrix of $\Sigma$ restricted to the rows corresponding to \emph{observed} variables under $m$ and columns corresponding to variables \emph{missing} under $m$. Define $\Sigma_{\obs,\obs}$ and $\Sigma_{\miss,\obs}$ analogously. Throughout, we assume that $\Sigma$ is invertible so that the distribution is non-degenerate. In practice, the non-degenerate case can be handled through ridge regularization. 
\begin{thmprop}\label{prop:main}
Suppose covariates $X$ and outcome $Y$ obey Condition~\ref{cond:lineargauss} (are linear-Gaussian). Then, the Bayes-optimal predictor for an arbitrary missingness mask $m \in \cM$, is
\begin{equation*}
    f^*_m = \mathbb{E}[Y \mid X_\obs, m] = (\theta_\obs + \Delta_\obs)^\top X_\obs + C_m
\end{equation*}
where $C_m \in \mathbb{R}$ is constant with respect to $X_\obs$ and
\begin{equation*}
    \Delta_\obs = ({\Sigma^{-1}_{\obs, \obs}}) \Sigma_{\obs, \miss}  \theta_\miss~.%
\end{equation*}%
\end{thmprop}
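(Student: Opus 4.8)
The plan is to compute the Bayes-optimal predictor $f_m^* = \E[Y \mid X_\obs, M=m]$ directly, since under squared loss this conditional expectation \emph{is} the optimal predictor. Decomposing $X$ under pattern $m$ into its observed and missing blocks, \eqref{eq:main_dgp_lin} reads, on the event $\{M=m\}$, $Y = \theta_\obs^\top X_\obs + \theta_\miss^\top X_\miss + \alpha_m + \epsilon$. Taking $\E[\,\cdot \mid X_\obs, M=m]$ and using linearity: $\theta_\obs^\top X_\obs$ is $X_\obs$-measurable, $\alpha_m$ is constant, and $\epsilon$ (mean zero, independent of $(X,M)$) drops out, leaving $f_m^* = \theta_\obs^\top X_\obs + \theta_\miss^\top\,\E[X_\miss \mid X_\obs, M=m] + \alpha_m$, which is exactly \eqref{eq:pre_derivation}. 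So the real work reduces to evaluating this single conditional mean under joint Gaussianity of $X$ and regrouping into an $X_\obs$-linear part and an $X_\obs$-free constant.

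Concretely, under Condition~\ref{cond:lineargauss} the vector $X$ is jointly Gaussian, so conditioning the multivariate normal on the sub-vector $X_\obs$ gives the standard formula $\E[X_\miss \mid X_\obs] = \mu_\miss + \Sigma_{\miss,\obs}\,\Sigma_{\obs,\obs}^{-1}(X_\obs - \mu_\obs)$, where $\mu_\obs,\mu_\miss$ are the corresponding blocks of $\boldsymbol\mu$; this is well-defined because $\Sigma_{\obs,\obs}$, a principal submatrix of the positive-definite (hence invertible) $\Sigma$, is itself invertible. Substituting into the expression above and adding $\theta_\obs^\top X_\obs$, the coefficient of $X_\obs$ is $\theta_\obs^\top + \theta_\miss^\top \Sigma_{\miss,\obs}\,\Sigma_{\obs,\obs}^{-1}$; transposing the second summand and using $\Sigma_{\miss,\obs}^\top = \Sigma_{\obs,\miss}$ together with the symmetry of $\Sigma_{\obs,\obs}^{-1}$ rewrites it as $(\theta_\obs + \Delta_\obs)^\top$ with $\Delta_\obs = \Sigma_{\obs,\obs}^{-1}\Sigma_{\obs,\miss}\theta_\miss$, as claimed. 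The remaining terms collect into $C_m = \alpha_m + \theta_\miss^\top\big(\mu_\miss - \Sigma_{\miss,\obs}\,\Sigma_{\obs,\obs}^{-1}\mu_\obs\big)$, which is manifestly free of $X_\obs$, giving $f_m^* = (\theta_\obs + \Delta_\obs)^\top X_\obs + C_m$.

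Given that \eqref{eq:pre_derivation} is already established, I do not anticipate a serious obstacle in the remaining steps: they amount to careful linear-algebra bookkeeping, where the only points needing attention are (i) the transpose/symmetry manipulation that places $\Delta_\obs$ in exactly the claimed orientation $\Sigma_{\obs,\obs}^{-1}\Sigma_{\obs,\miss}\theta_\miss$ rather than its transpose, and (ii) verifying invertibility of $\Sigma_{\obs,\obs}$ and that nothing $X_\obs$-dependent leaks into $C_m$. The genuinely delicate step sits upstream of the proposition, inside \eqref{eq:pre_derivation}: it silently replaces $\E[X_\miss \mid X_\obs, M=m]$ by $\E[X_\miss \mid X_\obs]$, which fails for a mechanism $M$ that depends on $X_\miss$ (conditioning on $\{M=m\}$ can reweight the Gaussian into a law with a non-affine conditional mean). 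To make Proposition~\ref{prop:main} self-contained I would state this reduction explicitly under the assumption $M \independent X_\miss \mid X_\obs$ (a missing-at-random-type condition, subsuming $M \independent X$); under it the argument above goes through verbatim, and in general $\Delta_\obs$ remains the Bayes-optimal linear correction within the class of predictors of Gaussian form.
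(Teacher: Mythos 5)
Your computation is correct and follows essentially the same route as the paper's own proof: split $Y$ into observed and missing blocks under pattern $m$, apply the Gaussian conditional-mean formula $\E[X_\miss \mid X_\obs] = \mu_\miss + \Sigma_{\miss,\obs}\Sigma_{\obs,\obs}^{-1}(X_\obs-\mu_\obs)$, and transpose using symmetry of $\Sigma$ to obtain $\Delta_\obs = \Sigma_{\obs,\obs}^{-1}\Sigma_{\obs,\miss}\theta_\miss$; your expression for $C_m$ agrees with the paper's, and your invertibility remark about the principal submatrix $\Sigma_{\obs,\obs}$ is a detail the paper leaves implicit.

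The caveat in your last paragraph is real, and it applies equally to the paper's own argument, which computes $\E[Y\mid X_\obs]$ rather than $\E[Y\mid X_\obs, M=m]$ even though the proposition asserts the latter and Condition~\ref{cond:lineargauss} allows $M$ to be arbitrary. If $M$ depends on $X_\miss$ given $X_\obs$, conditioning on $\{M=m\}$ tilts the Gaussian law of $X_\miss$ given $X_\obs$, and the resulting conditional mean need not be affine in $X_\obs$: take $X=(X_1,X_2)$ jointly Gaussian with correlation $\rho\neq 0$ and let $X_2$ be missing exactly when $X_2>0$; under that pattern $\E[X_2\mid X_1, M=m]$ is a truncated-Gaussian mean, nonlinear in $X_1$, so the Bayes-optimal predictor is not of the form $(\theta_\obs+\Delta_\obs)^\top X_\obs + C_m$. (The paper's simulation Settings A and B are of exactly this type, yet are claimed to satisfy the proposition's conditions.) Your proposed repair --- either assume $M \independent X_\miss \mid X_\obs$ so that the two conditional means coincide, or read the result as identifying the best predictor that is affine in $X_\obs$ rather than the Bayes-optimal one --- is the right one, and with it your proof is complete.
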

Proposition~\ref{prop:main} states that, for a linear-Gaussian system, the Bayes-optimal model under missingness pattern $m$ has the same form as \spsm{} with pattern-specific intercept, combining coefficients of a main model $\theta$ and specializations $\Delta_\obs$. The result is proven in Appendix~\ref{app:thm}.

In nonlinear DGPs, the optimal correction term $\Delta_\obs$ may not be constant with respect to $X_\obs$. The NeuMiss model by ~\citet{morvan2020neumiss} learns such corrections as functions of the input and missingness mask using deep neural networks. However, this method lacks the interpretability of sparse linear models sought here.
Even in this more general case, \spsm{} may achieve a good bias-variance tradeoff. Indeed, we find on real-world data, which may not be linear, that \spsm{} is often preferable to strong nonlinear baselines. 

\subsubsection{When is Sparsity Optimal?}
Like other sparsity-inducing regularized estimators, such as  \lasso{}~\citep{tibshirani1996regression}, \spsm{}
reduces variance by shrinking some model parameters to zero. Under appropriate conditions, when the training set grows large, we expect the learned sparsity to correspond to properties inherent to the DGP. For \lasso{}, this means recovering zeros in the coefficient vector of the outcome. For \spsm{}, objective~\eqref{eq:spsm_ols} is used to learn submodels on the form $(\theta_\obs + \Delta_\obs)^\top X_\obs$ where $\theta$ is shared between patterns and $\Delta_\obs$ is sparse. It is natural to ask: When can we expect the ``true'' or an ``optimal'' $\Delta_\obs$ to be sparse and, if it is, when can we recover this sparsity with \spsm{}? Surprisingly, as we will see, the optimal specialization $\Delta_\obs$ may be sparse even if $Y$ depends on \emph{all} covariates in $X$.

Assume that Condition~\ref{cond:lineargauss} (Linear-Gaussian DGP) holds with system parameters $(\mu, \Sigma, \theta, \{\alpha_m\}, \sigma_Y)$. We can characterize sparsity in the Bayes-optimal model $(\theta, \{\Delta_\obs\})$, see Proposition~\ref{prop:main}, by the interactivity of covariates. We say that variables $X_j$ and $X_{j'}$ are non-interactive if they are statistically independent given all other covariates. As is well-known, for Gaussian $X$, $X_j$ and $X_{j'}$ are non-interactive if $S_{j,j'} = 0$, where $S = \Sigma^{-1}$ is the precision matrix.
\begin{thmprop}[Sparsity in optimal model]\label{prop:sparsity}
Suppose that a covariate $j\in [d]$ is observed under pattern $m$, i.e., $m_j = 0$, and assume that $X_j$ is non-interactive with every covariate $X_{j'}$ that is missing under $m$. Then $(\Delta_\obs)_j = 0$. %
\end{thmprop}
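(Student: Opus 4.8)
The plan is to start from the closed form $\Delta_\obs = \Sigma^{-1}_{\obs,\obs}\,\Sigma_{\obs,\miss}\,\theta_\miss$ supplied by Proposition~\ref{prop:main}, and to re-express the factor $\Sigma^{-1}_{\obs,\obs}\Sigma_{\obs,\miss}$ in terms of the precision matrix $S \coloneqq \Sigma^{-1}$. This is the natural move, because the non-interactivity hypothesis is a statement about $S$ --- it says $S_{j,j'} = 0$ for every $j'$ missing under $m$ --- whereas $\Delta_\obs$ as written involves blocks of $\Sigma$ together with the inverse of the \emph{submatrix} $\Sigma_{\obs,\obs}$.

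First I would partition $[d]$ into the observed index set (where $m=0$) and the missing index set (where $m=1$), and partition $\Sigma$ and $S$ conformally into $2\times 2$ blocks. Reading off the observed-by-missing block of the identity $\Sigma S = I$ gives $\Sigma_{\obs,\obs} S_{\obs,\miss} + \Sigma_{\obs,\miss} S_{\miss,\miss} = 0$, since the corresponding block of $I$ vanishes. Because $\Sigma$ is invertible, it is positive definite, hence so is $S$, and therefore every principal submatrix of $S$ is positive definite; in particular $S_{\miss,\miss}$ is invertible. Rearranging then yields the identity $\Sigma^{-1}_{\obs,\obs}\Sigma_{\obs,\miss} = -\,S_{\obs,\miss}\,S^{-1}_{\miss,\miss}$, and substituting into Proposition~\ref{prop:main} gives $\Delta_\obs = -\,S_{\obs,\miss}\,S^{-1}_{\miss,\miss}\,\theta_\miss$.

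The conclusion is then immediate: the $j$-th coordinate of $\Delta_\obs$ equals $-\,(\text{$j$-th row of } S_{\obs,\miss})\,S^{-1}_{\miss,\miss}\,\theta_\miss$, and by hypothesis that row, namely $(S_{j,j'})_{j' \text{ missing under } m}$, is the zero vector. Hence $(\Delta_\obs)_j = 0$, irrespective of $\theta_\miss$ and of the remaining entries of $S$.

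The only step that requires genuine care is the passage between ``the inverse of a submatrix'' ($\Sigma^{-1}_{\obs,\obs}$, which appears in $\Delta_\obs$) and ``a submatrix of the inverse'' ($S_{\obs,\miss}$ and $S_{\miss,\miss}$, where the hypothesis lives): the block identity above, plus verifying that $S_{\miss,\miss}$ is invertible so the rearrangement is legitimate. Everything else is bookkeeping. I would also record the interpretation this gives: after conditioning on all observed covariates, the observed coordinate $j$ needs a pattern-specific correction only if $X_j$ stays statistically dependent on some missing covariate given the rest; a non-interactive $X_j$ is already ``screened off'', so its shared coefficient is already Bayes-optimal and the specialization leaves it untouched.
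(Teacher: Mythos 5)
Your proof is correct and follows essentially the same route as the paper's: both arguments reduce the claim to the observation that $(\Delta_\obs)_j$ is a linear function of the $j$-th row of the observed-by-missing block of the precision matrix $S = \Sigma^{-1}$, which vanishes under the non-interactivity hypothesis. The only difference is presentational --- the paper applies $S$ to a vector whose image under $\Sigma$ has zero observed block and leaves the resulting vector $g_m$ unspecified, whereas you derive the explicit block-inverse identity $\Sigma^{-1}_{\obs,\obs}\Sigma_{\obs,\miss} = -S_{\obs,\miss}S^{-1}_{\miss,\miss}$, which amounts to identifying $g_m = -S^{-1}_{\miss,\miss}\theta_\miss$ and requires the extra (easy) check that $S_{\miss,\miss}$ is invertible.
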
%
Proposition~\ref{prop:sparsity} states that the sparsity in $\Delta$ is partially determined by the covariance pattern of observed and unobserved covariates. For example, specialization is \emph{not needed} for a variable $j$ under pattern $m$ if it is uncorrelated with all missing variables under $m$. 
Conversely, specialization, i.e., $(\Delta_\obs)_j \neq 0$, \emph{is needed} for features $j$ that are predictive ($\theta_j \neq 0$) and redundant (replicated well by unobserved features which are also predictive). This is because in the main model, redundant variables may share the predictive burden, but when they are partitioned by missingness, they have to carry it alone. This shows that prediction with a single model and zero-imputation is sub-optimal in general.

\subsubsection{Consistency of SPSM}
In the large-sample limit, under Condition~\ref{cond:lineargauss}, we can prove that \spsm{} recovers maximally sparse optimal model parameters. If the true system parameters are also sparse, \spsm{} learns these.
\begin{thmthm}\label{thm:consistency}
Suppose that Condition~\ref{cond:lineargauss} holds with parameters $(\theta, \{\Delta_\obs\})$ as in Proposition~\ref{prop:main}, such that, for each covariate $j$, the number of patterns $m$ for which $m_j = 0$ and $(\Delta_\obs)_j = 0$ is strictly larger than the number of patterns $m'$ for which $m'_j=0$ and $(\Delta_{\obs'})_j \neq 0$. 
Then, with $\gamma=0$ and fixed $\lambda >0$, the true parameters $(\theta, \{\Delta_\obs\})$ are  the unique  solution to \eqref{eq:spsm_ols} in the large-sample limit,  $n \rightarrow \infty$.

\end{thmthm}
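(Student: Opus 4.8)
The plan is to reparametrize \eqref{eq:spsm_ols} in terms of the combined per‑pattern coefficients $\beta_m \coloneqq \theta_\obs + \Delta_\obs$ together with the shared vector $\theta$. The squared‑loss term depends on the parameters only through $\{\beta_m\}_{m\in\cM}$ (and the unpenalized pattern intercepts), while, with $\gamma=0$, the penalty is $\sum_{m}\frac{\lambda_m}{n_m}\|\beta_m-\theta_\obs\|_1$, which is the only place $\theta$ appears. Profiling out $\theta$, the penalty decouples over coordinates: for each $j\in[d]$ the optimal $\theta_j$ is a weighted median of $\{(\beta_m)_j : m_j=0\}$ with weights proportional to $\lambda_m/n_m$, and its induced value $V_j(\{(\beta_m)_j\}) \coloneqq \min_t \sum_{m:m_j=0}\frac{\lambda_m}{n_m}|(\beta_m)_j-t|$ is added back to the loss. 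Hence \eqref{eq:spsm_ols} is equivalent to minimizing $\frac1n\sum_{i=1}^n(\beta_{m^{(i)}}^\top x^{(i)}_\obsi - y^{(i)})^2 + \sum_{j=1}^d V_j(\{(\beta_m)_j\})$ over $\{\beta_m\}$, after which $\theta$ is recovered coordinatewise as the corresponding weighted median.

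Next I would pass to the limit $n\to\infty$ (assuming every pattern has positive probability, so $n_m\to\infty$). By the law of large numbers, using the Gaussian moment bounds of Condition~\ref{cond:lineargauss}, the empirical loss converges, locally uniformly, to the population risk $\sum_m p_m\,\E[(\beta_m^\top X_\obs + c_m - Y)^2 \mid M=m]$ with $p_m = \pr(M=m)$; since $\Sigma\succ 0$ forces every $\Sigma_{\obs,\obs}\succ 0$, this risk is coercive in $\{\beta_m\}$ and has the \emph{unique} minimizer given by the per‑pattern population regressions, namely $\beta_m^\star = \theta_\obs+\Delta_\obs$ with intercept $C_m$ as in Proposition~\ref{prop:main}. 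The penalty‑induced term $\sum_j V_j$ is $O(1/n)$ and vanishes uniformly on compacts. A standard M‑estimation argument (minimizers stay in a compact set by coercivity, then invoke uniform convergence and uniqueness of the limiting minimizer) gives $\hat\beta^{(n)}_m\to\beta_m^\star$ for every $m$; this reproves the consistency of the sums referenced in Section~\ref{sec:SPSM}.

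Finally I would identify $\theta$. Given $\{\hat\beta^{(n)}_m\}$, each $\hat\theta^{(n)}_j$ is the weighted median of $\{(\hat\beta^{(n)}_m)_j : m_j=0\}$, and by Proposition~\ref{prop:main} the limiting values satisfy $(\beta_m^\star)_j = \theta_j$ exactly for those patterns with $(\Delta_\obs)_j=0$. The hypothesis guarantees that these patterns form a strict majority of $\{m : m_j=0\}$; a value occupying a strict (weighted) majority of the data is the \emph{unique} weighted median, and the weighted‑median map is continuous at such configurations, so $\hat\theta^{(n)}_j\to\theta_j$ and hence $\hat\Delta^{(n)}_\obs = \hat\beta^{(n)}_m - \hat\theta^{(n)}_\obs \to \Delta_\obs$. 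The same strict‑majority fact yields uniqueness of the limiting solution $(\theta,\{\Delta_\obs\})$.

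The main obstacle is making the tie‑breaking by an asymptotically negligible penalty rigorous. Two points need care. First, one cannot take limits termwise, since the penalty is dominated by the loss; the reparametrization above is what rescues the argument, because the loss is \emph{exactly constant in $\theta$} once $\{\beta_m\}$ is fixed, so $\theta$ is pinned down entirely by the (vanishing but coordinate‑separable and scale‑invariant) $\ell_1$ term while $\{\beta_m\}$ is pinned down by the loss alone — alternatively the whole statement can be phrased through epi‑convergence of the reparametrized objectives. Second, one must control the weighted‑median map near configurations with exact ties, which is exactly where the strict inequality in the hypothesis is used, and check that the asymptotic weights $\lambda_m/n_m$ (inverse‑frequency weighting, equal across patterns when the $\lambda_m$ are equal and the patterns balanced) do not overturn the count‑based majority. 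The unpenalized pattern intercepts $\alpha_m$ are handled routinely: they are profiled out jointly with $\theta$ and do not affect the argument.
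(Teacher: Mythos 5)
Your proposal is correct and takes essentially the same route as the paper's proof: the risk term asymptotically fixes the per-pattern sums $\theta_\obs+\Delta_\obs$, the vanishing $\ell_1$ penalty acts as a tie-breaker over the decomposition, and your coordinatewise weighted-median characterization of the profiled $\theta_j$ is exactly the content of the paper's triangle-inequality counting argument over the sets $I_j$ and $I_j^c$. The one point where you go beyond the paper is your caveat about the asymptotic tie-break weights $\lambda_m/n_m \propto 1/p_m$: the paper's proof silently replaces this inverse-frequency-weighted sum by the unweighted $\sum_m\|\Delta'_\obs\|_1$, so the count-based majority hypothesis suffices only under that simplification (or equal pattern frequencies), and your instinct to check that the weighting does not overturn the count-based majority is warranted.
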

\begin{proofsketch}
We provide a full proof in Appendix~\ref{app:consistency_lin}. The main steps involve showing that the SPSM objective \eqref{eq:spsm_ols} is asymptotically dominated by the risk term, and the \emph{sums} of its minimizers ($\theta_\obs^*$ +
$\Delta^*_{\obs}$) coincide with optimal regression coefficients  ($\hat{\theta}_{\obs}$) fit independently for each missingness pattern $m$. For any $\lambda > 0$, regularization steers the solution towards one which is maximally sparse in $\Delta^*_\obs$. 
\end{proofsketch}
%
% Relationship
%
\subsection{Relationship to Other Methods}
For particular extreme values of the regularization parameters $\gamma, \lambda_m$, \spsm{} coincides with other methods (Table~\ref{tab:related_methods}). First, the full-sharing model ($\lambda_m \rightarrow \infty, \gamma<\infty$) coincides with fitting a single model to all samples after zero-imputation. To see this, set $\Delta_\obs = 0$ for all $m$ and note 
$$
{\theta_\obsi}^\top x_\obsi^{(i)} = \theta^\top I_0(\tilde{x}^{(i)})
$$
where $I_0(\tilde{x})$ replaces missing values in $\tilde{x}$ with 0. In this setting, submodel coefficients cannot adapt to $m$. In the implementation, we allow the fitting of pattern-specific intercepts which are not regularized by $\lambda_m$.
\begin{table}
    \centering
    \begin{tabular}{r|c c}
        & $\gamma < \infty$ & $\gamma \rightarrow \infty$\\
        \hline
        $\lambda_m \rightarrow \infty$ &  Zero imputation & Constant\\
        $0 <\lambda_m < \infty$  & Sharing model & Pattern submodel\\
        $\lambda_m = 0$  & No sharing & Pattern submodel \\
    \end{tabular}
    \caption{Extreme cases and equivalences of \spsm{}, provided that no pattern-specific intercept is used. }\label{tab:related_methods}
\end{table}
Second, ($\lambda_m<\infty, \gamma \rightarrow \infty$) corresponds with the standard \psm{} without parameter sharing~\citep{fletcher2020missing} or the ExpandedLR method of~\citep{lemorvan20a_linear}. The precise nature of this equivalence depends on the choice of regularization.\footnote{\citet{fletcher2020missing} adopted a two-stage estimation procedure, the relaxed LASSO~\citep{meinshausen2007relaxed}.} In this setting, each submodel $\hat{f}_m$ is fit completely independently of every other. Finally, an \spsm{} model with optimal parameters $(\theta, \{\Delta_\obs\})$, in the linear-Gaussian case, implicitly makes a perfect single linear imputation, $$\mathbb{E}[X_\miss \mid X_\obs] = X_\obs{\Sigma^{-1}_{\obs, \obs}} \Sigma_{\obs, \miss},$$ and applies the main model’s parameters $\theta_\miss$ to the imputed values.  If many samples are available, it may be feasible to learn the imputation  directly. However, if the variables in $X_\obs$ and $X_m$ are never observed together, imputation is no longer possible. In contrast, \spsm{} could still learn an optimal submodel for each pattern, given enough samples. 

%
% EXPERIMENTS
%
\section{Experiments}\label{sec:exp}
We evaluate the proposed \spsm{} model\footnote{Code to reproduce experiments and the appendix are available at \url{https://github.com/Healthy-AI/spsm}.} on simulated and on real-world data, aiming to answer two main questions: How does the accuracy of \spsm{} compare to baseline models, including impute-then-regress, for small and larger samples; How does sparsity in pattern specializations $\Delta$ affect performance and interpretation?

\subsubsection*{Experimental Setup}
In the \spsm{} algorithm, before one-hot-encoding of categorical features, all missingness patterns in the training set are identified. At test time, patterns that did not occur during training, variables are removed until the closest training pattern is recovered.
Both linear and logistic variants of \spsm{} were trained using the L-BFGS-B solver provided as part of the SciPy Python package~\cite {virtanen2020scipy}. Our implementation supports both $\ell_1$ and $\ell_2$-regularization of the main model parameters $\theta$ and $\ell_1$-regularization of pattern-specific deviations $\Delta$. This includes both the no-sharing pattern submodel ($\lambda_m<\infty, \gamma \rightarrow \infty$) and full-sharing model ($\lambda_m \rightarrow \infty, \gamma<\infty$) as special cases. In the experiments, $\gamma$ can take values within $[0, 0.1, 1, 5, 10, 100]$, and we used a shared $\lambda_m = \lambda \in [1, 5, 10, 100, 1000, 1e^8]$ for all patterns. 
Intercepts were added for both the main model and for each pattern without regularization. We do not require patterns to have a minimum sample size but support this functionality (appendix Table~\ref{tab:appendix_sample_sizes}). For missingness patterns at test time that did not occur in the training data, variables were removed until the closest training pattern was recovered. 

We compare linear and logistic regression models to the following baseline methods: Imputation + Ridge / logistic regression (\ridge/\lr), Imputation + Multilayer perceptron (\mlp) with a single hidden layer, and XGBoost (\xgb), where missing values are supported by default~\citep{chen2019package}. Last, we compare the Pattern Submodel (\psm)~\citep{fletcher2020missing}. Note, our implementation of \psm{} is based on a special case of our \spsm{} implementation where regularization is applied over all patterns and not in each pattern separately. Hyperparameters are based on the validation set. For imputation, we use zero ($I_0$), mean ($I_\mu$) or iterative imputation ($I_{it}$) from SciKit-Learn~\citep{scikit-learn,buren}. \xgb{}'s handling of missing values is denoted $I_n$.
Details about method implementations, hyperparameters and evaluation metrics are given in Appendix~\ref{app:B}.

\subsection{Simulated Data}
We use simulated data to illustrate the behavior of sharing pattern submodels and baselines in relation to Proposition~\ref{prop:main}, focusing on bias and variance. 
We sample $d$ input features $X$ from a multivariate Gaussian $\mathcal{N}(0, \Sigma)$ with covariance matrix $\Sigma$ specified by a cluster structure; the features are partitioned into $k$ clusters of equal size. The covariance is defined as $\Sigma_{ii} = 1$, $\Sigma_{i\neq j} = 0$ if $i,j$ are in different clusters, and $\Sigma_{i\neq j} = c$ if $i,j$ are in the same cluster, where $c$ is chosen as large as possible so that $\Sigma$ remains positive semidefinite. 

Each cluster $c \in \{1, ..., k\}$ is represented in the outcome function $Y = \theta^\top X + \epsilon$ by a single feature $i(c)$, such that $\beta_{i(c)} \sim \mathcal{N}(0,1)$ and $\theta_j = 0$ for other features. We let $\epsilon \sim \mathcal{N}(0,1)$, independently for each sample. 
We consider three missingness settings: In Setting A, each variable in cluster $c$ is missing if $X_{i(c)} > -0.5$. In Setting B, each variable in cluster $c$---except one chosen uniformly at random---is missing if $X_{i(c)} > -0.5$. Both settings satisfy the conditions of Proposition~\ref{prop:main} but are designed to violate MAR by letting the outcome variable depend directly on missing values which may not be recovered from observed ones. In Setting C, we follow missing-completely-at-random (MCAR), where variables are missing independently with probability 0.2. We generate samples with $d=20$ and $k=5$. 

In Figure~\ref{fig:synth_exp_A}, we show the test set coefficient of determination ($R^2$) for Setting A. Note, that the methods which use imputation (imputation method selected based on validation error at each data set size) perform well initially but plateau quickly, indicating relatively high bias. \spsm{} and \psm{} both achieve a higher $R^2$ for the full sample. \spsm{} performs better than \psm{} for small samples indicating lower variance. The \spsm{} model includes 42 non-zero pattern-specific coefficients when the training set size is 0.2 and 68 with the fraction is 0.8.
Results for Setting B and C are presented in Appendix~\ref{app:simu_resul}. Even in the MCAR setting C, \psm{} performs considerably worse than alternatives due to excessive variance from fitting independent pattern-specific models.
\begin{figure}[t]
    \centering
    \includegraphics[width=.8\columnwidth]{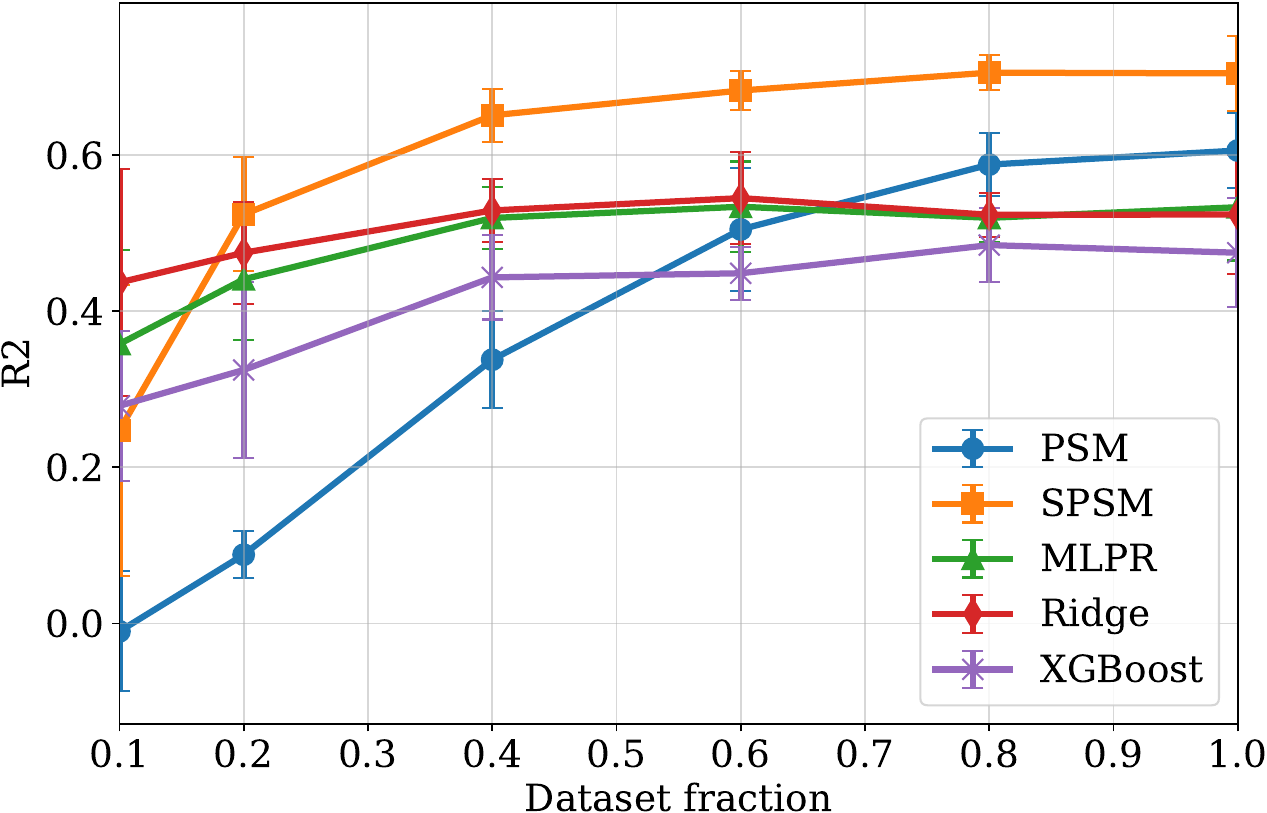}
    \caption{Performance on simulated data Setting A (higher is better). Error bars show standard deviation over 5  random data splits. The full data set has $n=2000$ samples.}
    \label{fig:synth_exp_A}
\end{figure}

\subsection{Real-World Tasks}\label{sec:datasets}
We describe two health care data sets used for classification and regression. More information on the non-health related HOUSING~\cite{de2011ames} data is shown in Appendix~\ref{appendix:housing}.  

\paragraph{ADNI}
The data is obtained from the publicly available Alzheimer's Disease Neuroimaging Initiative (ADNI) database.\footnote{\url{http://adni.loni.usc.edu}} ADNI collects clinical data, neuroimaging and genetic data~\citep{ADNI}. In the classification task, we predict if a patient's diagnosis will change 2 years after baseline diagnosis. The regression task aims to predict the outcome of the ADAS13 (Alzheimer's Disease Assessment Scale)~\citep{mofrad2021cognitive} cognitive test at a 2-year follow-up based on available data at baseline.

\paragraph{SUPPORT}
We use data from the Study to Understand Prognoses and Preferences for Outcomes and Risks of Treatments (SUPPORT) ~\citep{Knausstudy}, which aims to model survival over a 180-day period in seriously ill hospitalized adults using the Physiology Score (SPS). Following~\citet{fletcher2020missing}, in the regression task we predict the SPS while for the classification task, we predict if a patient's SPS is above the median; the label rate is 50/50 by definition. We mimic their MNAR setting by adding 25 units to the SPS values of subjects missing the covariate ''partial pressure
of oxygen in the arterial blood''.

\subsection{Results}\label{sec:results}
We report the results on health care data in Table~\ref{tab:data_results}. For regression tasks, we provide the number of non-zero coefficients used by the linear models. In addition, we study prediction performance as a function of data set size in Figure~\ref{fig:ADNI_fraction_reg} and in the appendix Figure~\ref{fig:SUPPORT_fraction_reg}. The statistical uncertainty of the average error is measured with its square root, which is a standard deviation and expressed by 95\% confidence intervals over the test set. Results of HOUSING data are presented in Appendix~\ref{appendix:housing}.

\begin{table}[t!]
    \centering
    \begin{small}
    \begin{tabular}{clll}
      \multicolumn{2}{l}{\bfseries Regression} & $\boldsymbol{R}^2$ & \bfseries \# Coefficients\\
        \multicolumn{3}{l}{ADNI} \\
        \hline
        & \ridge{}, $I_{\mu}$ & 0.66 (0.59, 0.73) & 37 + 0\\
        & \xgb{}, $I_{\mu}$  &  0.41 (0.31, 0.50)& --- \\
        & \mlp{}, $I_{0}$ &    0.62 (0.55, 0.69)& --- \\
        & \psm{} &   0.51 (0.43, 0.60)& 0 + 430 \\
        & \spsm{}  &  0.66 (0.59, 0.73) & 37 + 21 \\
        \hline
        \multicolumn{3}{l}{SUPPORT} \\
        \hline
        & \ridge{},  $I_0$ & 0.38 (0.35, 0.42)& 11 + 0\\
        &\xgb{}, $I_n$  &  0.30 (0.27, 0.34) & ---\\
        & \mlp{},  $I_\mu$ & 0.56 (0.53, 0.59) &  --- \\
        & \psm{} & 0.52 (0.49, 0.56) & 0 + 188\\
        & \spsm{} & 0.53 (0.50, 0.56)& 11 + 91\\
        \hline
        \hline
      \multicolumn{2}{l}{\bfseries Classification}  & \bfseries AUC & \bfseries Accuracy \\
        \multicolumn{3}{l}{ADNI} \\
        \hline
         & \lr{}, $I_0$ & 0.85 (0.80, 0.90) & 0.85 (0.74, 0.94)  \\
        & \xgb{}, $I_n$ & 0.80 (0.74, 0.86) & 0.84 (0.73, 0.94) \\
        & \mlp{}, $I_0$ & 0.86 (0.78, 0.89) & 0.84 (0.73, 0.94)\\
        & \psm{} & 0.81 (0.75, 0.87) & 0.84 (0.74, 0.95)\\
        & \spsm{}  & 0.86 (0.81, 0.90) & 0.85 (0.75, 0.96) \\
        \hline
        \multicolumn{3}{l}{SUPPORT} \\
        \hline
        & \lr{}, $I_0$  &  0.83 (0.81, 0.85) &  0.77 (0.74, 0.79) \\
        & \xgb{}, $I_0$ & 0.85 (0.83, 0.87)  & 0.78 (0.75, 0.81) \\
        & \mlp{}, $I_0$ & 0.86 (0.85, 0.88) & 0.79 (0.76, 0.81)\\
        & \psm{} & 0.84 (0.83, 0.86) & 0.78 (0.75, 0.81) \\
        & \spsm{}  & 0.85 (0.83, 0.86)  & 0.78 (0.75, 0.80)\\
    \end{tabular}
    \end{small}
    \caption{Results for ADNI and SUPPORT tasks along with the respective imputation method (see setup). We also report the number of non-zero coefficients in shared ($k$) and pattern-specific models ($l$) as $k + l$.}\label{tab:data_results}
\end{table}

%
%\paragraph{Regression}
%ADNI 
For ADNI regression, \spsm{} and \ridge{} are the best performing models with $R^2$ of 0.66 showing the same confidence in the prediction. Validation performance resulted in selecting $\gamma = 10.0, \lambda=50$ for \spsm{}. With an $R^2$ score of 0.51, \psm{} seems not able to benefit from pattern-specificity in ADNI. In contrast, \spsm{} makes use of coefficient sharing which results in a significantly smaller number of coefficients compared to \psm{}.
%SUPPORT
For SUPPORT regression, \psm{} achieves almost the same result as \spsm{} ($R^2$ of 0.52--0.53) with partly overlapping confidence intervals for the predictions. Although, the number of coefficients used in \spsm{} is smaller than in \psm{} due to the coefficient sharing between submodels. The best regularization parameter values for \spsm{} were $\gamma=0.1,  \lambda = 5.0$ which is lower than for ADNI, consistent with the larger data set size. The best performing model is \mlp{} ($R^2$ of 0.56) for SUPPORT regression. However, the black-box nature of \mlp{} is not conducive to reasoning about the influence of the missingness pattern. Mean and zero imputation have the best validation performance for \ridge{}, \xgb{} and \mlp{}.
In summary, \spsm{} is consistently among the best-performing models in both data sets, with fairly tight confidence intervals.
%ADNI
In ADNI classification, \spsm{}, \mlp{} and \lr{}  achieve the highest prediction accuracy (0.84--0.85) and Area Under the ROC Curve (AUC) (0.85--0.86). All methods perform similarly well on ADNI. \spsm{} selected $\gamma=0$ and $\lambda=1.0$ which indicates moderate coefficient sharing. 
%
%SUPPORT
For SUPPORT data, all models perform almost at the same level.
\xgb{} and \mlp{} perform slightly better than \spsm{} ($\gamma$ = 0.1, $\lambda=10.0$) and \psm{}. 
Across ADNI and SUPPORT \lr, \xgb{} and \mlp{}  predominantly use zero imputation. In all tasks, \spsm{} performs comparably or favorably to all other methods. The tight confidence intervals for classification in both data sets indicate high certainty in the result averages.
%
%HOUSING
\paragraph{Non-Healthcare Data and Coefficient Specialization}
In contrast to the previous data sets, where sharing coefficients is beneficial, we see for the HOUSING data, a large advantage from nonlinear estimation: the tree-based approach \xgb{} (Table~\ref{tab:appendix_housing}). It shows an $R^2$ of 0.76 and outperforms the other baseline methods for the regression task confirming the non-linearity of that data set. We also do not see the same positive effect in specializing (\psm, \spsm{} not better than \ridge{} with imputation). None of the missing value indicators show a significant feature importance level in \xgb{} which might indicate that pattern specialization is not necessary. For results on the HOUSING data, see Appendix~\ref{appendix:housing}.

\paragraph{Performance with Varying Training Set Size}
Figure~\ref{fig:ADNI_fraction_reg} shows the test $R^2$ for linear models trained on different fractions of ADNI data. Each set was subsampled into fractions $0.2, 0.4, 0.6, 0.8, 1.0$ of the full data set. Especially for small fractions, \spsm{} benefits from coefficient sharing and lower variance data compared to \psm{}. \ridge{} with mean imputation performs comparably. A similar figure for the SUPPORT is presented in appendix Figure~\ref{fig:SUPPORT_fraction_reg}. \spsm{} and \psm{} perform equally well across the fractions, whereas \ridge{} shows high error compared to both pattern submodels. 

\begin{figure}[t]
    \centering
    \includegraphics[width=.8\columnwidth]{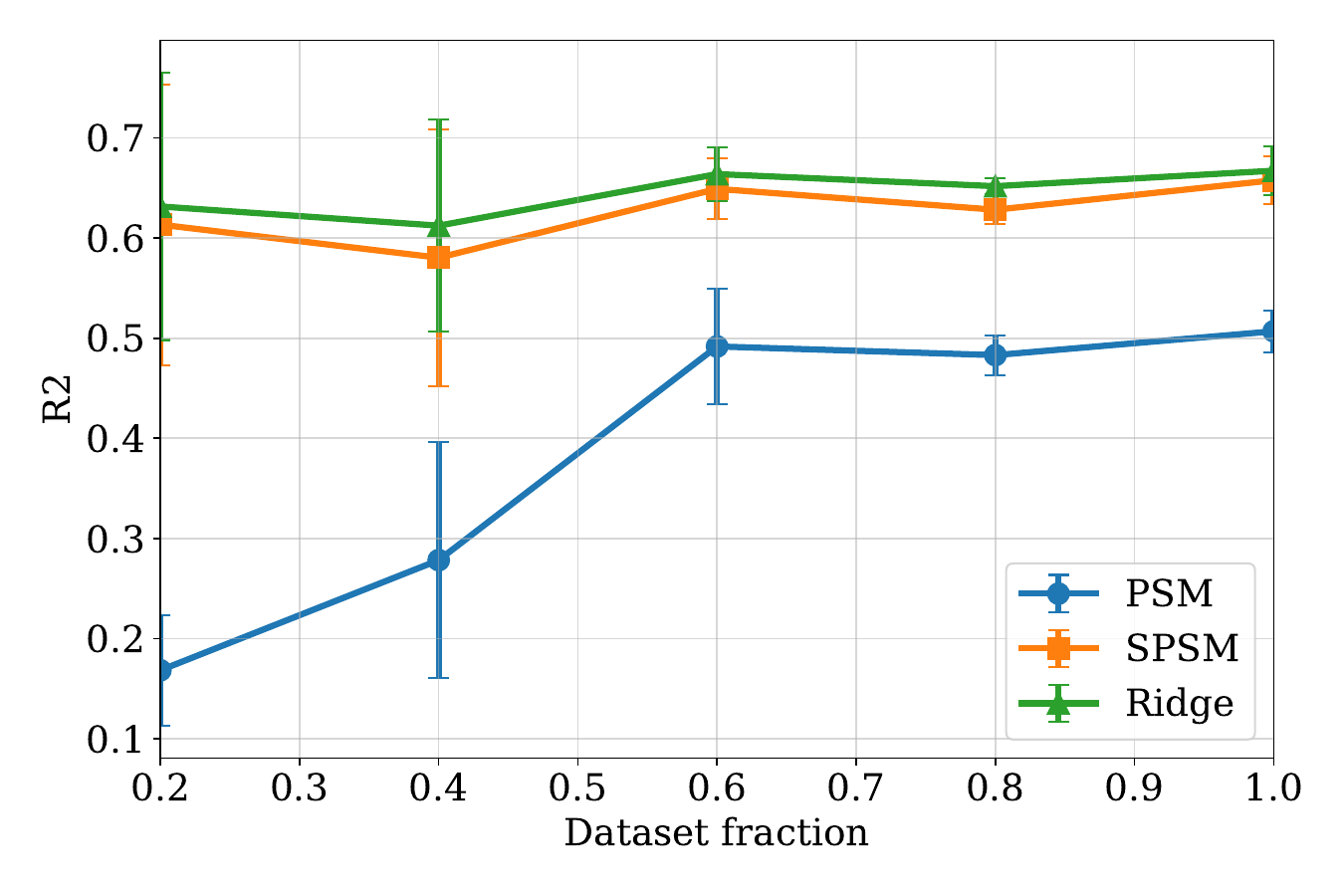}
    \caption{Performance on ADNI for the regression task. Error bars indicate standard deviation over 5 random subsamples of the data. Equal performance for \spsm{} and \ridge{} and subpar performance for \psm{} indicates that for ADNI regression, pattern specialization is mostly irrelevant.}
    \label{fig:ADNI_fraction_reg}
\end{figure}

\subsubsection*{Pattern Specialization in \spsm{}}
We inspect pattern specializations $\Delta$ for \spsm{} in the ADNI regression task with respect to interpretablity. In Table~\ref{tab:Coefficient_change}, we present the main model $\theta$ and pattern-specific coefficients $\Delta_4$ for pattern $4$. Table~\ref{tab:Full_table_all} in the appendix shows all patterns $m$ with $\Delta_\obs \neq 0$. For pattern 4, measurements of the amyloid-$\beta$ (ABETA) peptide and the proteins TAU and PTAU are missing in the baseline diagnostics. The absence of these three features affects pattern specialization: For an imaging test FDG-PET (fluorodeoxyglucose), the magnitude of its coefficient is increased, placing heavier weight on the feature in prediction. Similarly, the coefficients for Fusiform (brain volume), and ICV (intracranial volume) increase in magnitude and predictive significance when ABETA, TAU, and PTAU are absent. 
In contrast, for the feature AGE, the resulting coefficient of -0.019 (compared to 0.121 in the main model) means that the predictive influence of this feature decreases under pattern 4.
As Table~\ref{tab:Coefficient_change} shows, \spsm{} applied to tabular data allows for short descriptions of pattern specialization, which helps construct a simple and meaningful model.
%Sparsity in delta
We enforce sparsity in $\Delta$ to limit the number of differences between submodels, and present all features $j$ with specialized coefficients $\Delta_\obs(j) \neq 0$, five in the example case. In this way, the set of submodels is more interpretable and the user, e.g., a  medical staff member can be supported in decision-making. For a more detailed analysis on interpretability properties of \spsm{}, see Appendix~\ref{app:interpre}. 

\paragraph{Tradeoff between Interpretability and Accuracy}
The interpretability-accuracy tradeoff is especially crucial for practical use of \spsm{}. The empirical results do not show any significant evidence that our proposed sparsity regularization hurts prediction accuracy (Table~\ref{tab:data_results}, Figure~\ref{fig:ADNI_fraction_reg}). Nevertheless, in a practical scenario, domain experts may choose a simpler model at a slight cost in performance. Then, we can measure the tradeoff by varying values of hyperparameters to find an adequate balance~(Figure~\ref{fig:tradeoff_interpre_acc}). The parameter selection is based on the validation set and aligns with the test set results. We see some parameter sensitivity in SUPPORT that supports sharing, but only in a moderate way.

\begin{table}
    \centering
    \begin{tabular}{l|cc|r}
      \multicolumn{4}{l}{ \makecell{Missing features in pattern 4:\\ ABETA, TAU and PTAU at baseline (bl)} } \\
     \bfseries Feature & $\Delta_4$  &  $\theta$ & \bfseries \makecell{$\theta + \Delta_4$}  \\
      \hline
          Age &-0.140 & 0.121 & -0.019 \\
       FDG-PET  & -0.090 & -0.039  & -0.129\\
        \makecell{Whole Brain (bl)}  & 0.000  & -0.045 & -0.044\\
        Fusiform  & 0.016 & 0.021  & 0.037 \\
     ICV &  0.001 & 0.093 & 0.094 \\
     Intercept & -0.10 & 0.18 \\
    \end{tabular}
    \caption{Example of $\Delta_4$ for regression using \spsm{} using ADNI. \spsm{} takes $\gamma=10$ and $\lambda = 13$ as parameters for a single seed. There are 10 missingness pattern in total, while 4 of them have non-zero coefficients for $\Delta$ and pattern-specific intercept. Coefficients are for standardized variables.  \label{tab:Coefficient_change}} 
\end{table}

\section{Related Work}
\label{sec:related}
\emph{Pattern-mixture missingness} refers to distributions well-described by an independent missingness component and a covariate model dependent on this pattern~\citep{rubin1976inference,little1993pattern}. In this work, \emph{pattern missingness} refers to emergent patterns which may or may not depend on observed covariates~\citep{marshall2002prospective}. 
~\citet{fletcher2020missing,lemorvan20a_linear} and \citet{Bertsimas2021PredictionWM} define pattern submodels for flexible handling of test time missingness.
The ExpandedLR method of \citet{lemorvan20a_linear} represents a related method to pattern submodels. However, they neither study coefficient sharing between models nor provide a theoretical analysis of when optimal submodels have partly identical coefficients (sharing, sparsity in specialization). \citet{marshall2002prospective} describes the one-step sweep method using estimated coefficients and an augmented covariance matrix obtained from fully observed and incomplete data at test time. In very recent and so far unpublished work, \citet{Bertsimas2021PredictionWM} present two methods for predicting with test time missingness. First, \emph{Affinely adaptive regression} specializes a shared model by applying a coefficient correction given by a linear function of the missingness pattern. When the number of variables $d$ is smaller than the number of patterns (which could grow as $2^d$), and the outcome is not smooth in changes to missingness mask, this may introduce significant bias. The resulting bias-variance tradeoff differs from our method, and unlike our work, is not justified by theoretical analysis. Second, \emph{Finitely adaptive regression} starts by placing each pattern in the same model, recursively partitioning them into subsets. 

Several deep learning methods which are applicable under test time missingness with or without explicitly attempting to impute missing values have been  proposed~\citep{bengio1995recurrent,che2018recurrent,morvan2020neumiss, lemorvan20a_linear, nazabal2020handling}. The NeuMiss network, discussed briefly in Section~\ref{sec:linear_gaussian}, proposes a new type of non-linearity: the multiplication by the missingness indicator~\citep{morvan2020neumiss}. NeuMiss approximates the specialization term $\Delta_\obs^\top X_\obs$ (along with per-pattern biases) using a deep neural network where both covariates and missingness mask are given as input, sharing parameters across patterns. NeuMiss and Affinely adaptive regression (see above) are similar since their pattern specializations are functions of the inputs and the masks, both in contrast to SPSM. Moreover, neither method attempts to learn sparse specialization terms (e.g., no $\ell_1$ regularization of $\Delta$). 

\section{Conclusion}\label{sec:dis_con}
We have presented sharing pattern submodels (\spsm{}) for prediction with missing values at test time. We enforce parameter sharing through sparsity in pattern coefficient specializations via regularization and analyze \spsm{}'s consistency properties. We have described settings where information sharing is optimal even when the prediction target depends on missing values and the missingness pattern itself. Experimental results using synthetic and real-world data confirm that \spsm{} performs comparably or slightly better than baselines across all data sets without relying on imputation. Notably, the proposed method never performs worse than non-sharing pattern submodels as these do not use the available data efficiently. While \spsm{} is limited to learning linear models, it is not limited to learning from linear systems. An interesting direction is to identify other classes of models developed with interpretability that could benefit from this type of sharing.

%
% ACKNOWLEDGEMENTS 
\section*{Acknowledgements}
We want to thank Devdatt Dubhashi and Marine Le Morvan for their support and fruitful discussions. 

This work was partly supported by WASP (Wallenberg AI, Autonomous Systems and Software Program) funded by the Knut and Alice Wallenberg foundation. 

The computations were enabled by resources provided by the Swedish National Infrastructure for Computing (SNIC) at Chalmers Centre for Computational Science and Engineering (C3SE) partially funded by the Swedish Research Council through grant agreement no. 2018-05973.

Data used in preparation of this article were obtained from the Alzheimer’s Disease Neuroimaging Initiative (ADNI) database (adni.loni.usc.edu). As such, the investigators within the ADNI contributed to the design and implementation of ADNI and/or provided data but did not participate in the analysis or writing of this report. A complete listing of ADNI investigators can be found at: \url{http://adni.loni.usc.edu/wp-content/uploads/how_to_apply/ADNI_Acknowledgement_List.pdf}
%
% REFERENCES
%
\fontsize{9.1pt}{10.2pt}
\selectfont
\bibliography{references}

%\end{document}

\clearpage
%
% APPENDIX
\appendix

\section{Technical appendix}

\subsection{Variable dependencies}
The assumed (causal) dependencies of the variables $X,M,Y$ are represented in a directed graph in Figure~\ref{fig:DAG}.

\begin{figure}[t]
    \centering
    \begin{tikzpicture}[node distance={8mm}, thick, main/.style = {draw, circle}, unobs/.style = {draw, dashed, circle}, every arrow/.append style={dash, thick}] 
        \coordinate (origin);
        \node[main] (1) {X}; 
        \node[main] (2) [right of=1, xshift=2.5cm] {$\widetilde{X}$}; 
        \node[main] (4) [below of=2, yshift=-.7cm] {M};
        \node[main] (5) [below of=1, yshift=-.7cm] {Y};
        \node[unobs] (6) [right of=1, yshift=0.7cm, xshift=.85cm] {U}; 

        \draw[->](1) -- (2); 
        \draw[->](4) -- (5); 
        \draw[->](4) -- (2);
        \draw[->](1) -- (5);
        \draw[->](1) -- (4);
        \draw[->](6) -- (1);
        \draw[->](6) -- (4);
        \draw[->](6) -- (5);
    \end{tikzpicture}
    \caption{Directed graph showing assumed probabilistic dependencies. $\tilde{X}$ is a deterministic function of $X, M$. Unobserved variables $U$ may influence both covariates $X$, missingness $M$ and the outcome $Y$, ruling out  `missing at random' (MAR).}\label{fig:DAG}
\end{figure}
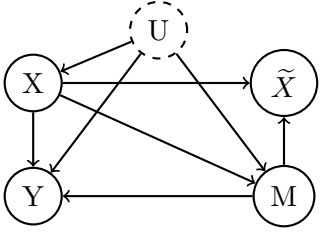

\subsection{Consistency in the general case}
\label{app:consistency}
\begin{thmprop}\label{prop:consistency}
For each pattern $m$, the minimizers $(\theta^*, \Delta_\obs^*)$ of \eqref{eq:spsm_ols} are consistent estimators of the best linear approximation to $\E[Y \mid X_\obs, M=m]$, 
$$
\lim_{n \rightarrow \infty} (\theta_\obs^* + \Delta_\obs^*) = \min_{\eta} \E[(\eta^\top X_\obs - Y)^2 \mid M=m]~.
$$
When the true outcome is linear, $Y = \eta_\obs^\top X_\obs + \epsilon$ with Gaussian errors $\epsilon$, $\lim_{n \rightarrow \infty} (\theta_\obs^* + \Delta_\obs^*) = \eta_\obs$~.
\end{thmprop}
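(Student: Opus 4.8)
The plan is to treat this as a standard M-estimation (argmin-consistency) argument, using the fact that, although $\theta$ is shared, objective~\eqref{eq:spsm_ols} can be reparametrized through the per-pattern effective coefficients $\beta_m \coloneqq \theta_\obs + \Delta_\obs$. For any target collection $\{\beta_m\}_{m\in\cM}$ one may take, e.g., $\theta = 0$ and $\Delta_\obs = \beta_m$, so the feasible set for $\{\beta_m\}$ is all of $\prod_m \R^{d_m}$; the coupling and the $\ell_1$ term affect only how a given $\{\beta_m\}$ is split, not which collections are attainable. Hence the empirical risk part decouples, pattern $m$ contributing $\frac{n_m}{n}\cdot\frac{1}{n_m}\sum_{i:\,m^{(i)}=m}(\beta_m^\top x^{(i)}_\obs - y^{(i)})^2$, and the remaining task is to track the minimizing \emph{sum} $\theta_\obs^* + \Delta_\obs^*$.

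First I would observe that the sample fractions $n_m/n$ converge a.s.\ to $\pr(M=m)$, and that for fixed $\gamma,\lambda_m$ the penalties $\frac{\gamma}{n}\|\theta\|$ and $\frac{\lambda_m}{n_m}\|\Delta_\obs\|_1$ tend to $0$ as soon as $n_m\to\infty$ --- which holds for every $m\in\cM$ since $\cM\subseteq\{0,1\}^d$ is finite and each occurring pattern has positive probability. By the law of large numbers the per-pattern empirical risks converge pointwise to $R_m(\beta)\coloneqq\E[(\beta^\top X_\obs - Y)^2\mid M=m]$; assuming $\E[X_\obs X_\obs^\top\mid m]\succ 0$ (the degenerate case absorbed by a vanishing ridge on $\beta_m$, as noted after Condition~\ref{cond:lineargauss}), each $R_m$ is strictly convex and coercive with unique minimizer $\beta_m^\star=\argmin_\eta \E[(\eta^\top X_\obs - Y)^2\mid m]$, the population best linear predictor of $Y$ from $X_\obs$ within pattern $m$.

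Next I would upgrade pointwise convergence to uniform convergence on compacta and pin down the minimizers. Evaluating the penalized empirical objective at the benchmark point $(\theta=0,\{\Delta_\obs=\hat\beta_m\})$, where $\hat\beta_m$ is the within-pattern OLS fit, yields a value converging to $\sum_m\pr(M=m)\,R_m(\beta_m^\star)$, so the optimal value is asymptotically no larger; combined with nonnegativity of the penalty and coercivity of the population risk, this confines the minimizing $\{\beta_m^*\}$ to a compact set with probability tending to one. The argmax/argmin theorem (Wald's consistency method) then gives $\theta_\obs^* + \Delta_\obs^* \to \beta_m^\star$ for each $m$, which is exactly the claimed limit. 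I expect this to be the main obstacle: because the decomposition $(\theta^*,\{\Delta_\obs^*\})$ is not identified and the vanishing $\ell_1$ term still influences how mass is apportioned between $\theta$ and $\Delta$, one must argue carefully that the \emph{sum} is undistorted in the limit --- which is precisely why the statement is phrased in terms of $\theta_\obs^*+\Delta_\obs^*$ only, and why $\Delta^*$ itself need not be sparse here (sparsity of $\Delta^*$ is instead the content of Theorem~\ref{thm:consistency} under the extra linear-Gaussian structure).

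Finally, for the linear case: if $Y=\eta_\obs^\top X_\obs+\epsilon$ with $\E[\epsilon\mid X_\obs,M=m]=0$ (in particular for Gaussian errors independent of $X_\obs$), the normal equations give $\E[X_\obs X_\obs^\top\mid m]\,\eta_\obs=\E[X_\obs Y\mid m]$, hence $\beta_m^\star=\eta_\obs$ and the limit is exact. Equivalently, one can invoke Proposition~\ref{prop:main}: in the linear-Gaussian DGP $\E[Y\mid X_\obs,m]$ is already affine in $X_\obs$ (folding $C_m$ into a pattern-specific intercept, which in the implementation is left unregularized), so its best linear approximation coincides with it and the two displayed conclusions agree.
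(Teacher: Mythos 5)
Your proposal is correct and follows essentially the same route as the paper's proof sketch: both exploit the fact that only the sums $\theta_\obs + \Delta_\obs$ are identified (the paper via the additive shift $c=-\theta^*_\obs$, you via the reparametrization $\beta_m = \theta_\obs+\Delta_\obs$), note that the regularizers vanish under the $1/n$ and $1/n_m$ normalizations, and reduce the problem to independent per-pattern least squares whose consistency is standard. Your version is somewhat more careful than the paper's sketch about coercivity, uniform convergence, and the fact that the vanishing penalty only affects the apportionment between $\theta$ and $\Delta$ and not the limiting sum.
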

\begin{proofsketch}
Minimizers $\Delta^*$ and $\theta^*$ will have bounded norm due to the quadratic form of the objectives. This, in the limit $n \rightarrow \infty$, regularization terms vanish due to normalization with $n$ and the minimizers $(\theta^*, \{ \Delta_\obs^*\})$ are invariant to additive transformations; with $c \in \bbR^{d_m}$, $\theta'_\obs = \theta^*_\obs + c$ and $\Delta'_\obs = \Delta^*_\obs - c$ also minimize the objective. Choosing $c = -\theta^*_\obs$, we get $\theta'_\obs = 0$ and the objective becomes separable in $m$. As a result, the objective can be written as $k$ standard least squares problems, one for each pattern. As is well known, for additive sub-Gaussian noise, the minimizers of these problems are consistent for the best linear approximation to the corresponding conditional mean.
\end{proofsketch}

\subsection{Proof of Proposition~\ref{prop:main}}
\label{app:thm}
\begin{thmprop*}[Proposition~\ref{prop:main} Restated]
Suppose covariates $X$ and outcome $Y$ obey Condition~\ref{cond:lineargauss} (are linear-Gaussian). Then, the Bayes-optimal predictor for an arbitrary missingness mask $m \in \cM$, is
\begin{equation*}
    f^*_m = \mathbb{E}[Y \mid X_\obs, m] = (\theta_\obs + \Delta_\obs)^\top X_\obs + C_m
\end{equation*}
where $C_m \in \mathbb{R}$ is constant with respect to $X_\obs$ and
\begin{equation*}
    \Delta_\obs = ({\Sigma^{-1}_{\obs, \obs}}) \Sigma_{\obs, \miss}  \theta_\miss~.%
\end{equation*}%
\end{thmprop*}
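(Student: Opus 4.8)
The plan is to take as the starting point the expression already recorded in \eqref{eq:pre_derivation}: since the Bayes-optimal predictor under squared loss is the conditional mean, $f^*_m = \E[Y\mid X_\obs, M=m] = \theta_\obs^\top X_\obs + \xi_\miss$ with $\xi_\miss = \theta_\miss^\top \E_{X_\miss}[X_\miss \mid X_\obs] + \alpha_m$. All that remains is to evaluate $\E[X_\miss\mid X_\obs]$ in closed form using the joint Gaussianity of $X$ asserted in Condition~\ref{cond:lineargauss}, and then to separate the resulting affine function of $X_\obs$ into its linear and constant parts.

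First I would partition $X=(X_\obs, X_\miss)$ with mean $\mu=(\mu_\obs,\mu_\miss)$ and the corresponding covariance blocks $\Sigma_{\obs,\obs}$, $\Sigma_{\obs,\miss}$, $\Sigma_{\miss,\obs}=\Sigma_{\obs,\miss}^\top$, $\Sigma_{\miss,\miss}$. Because $\Sigma$ is positive definite (invertible and a covariance), its principal submatrix $\Sigma_{\obs,\obs}$ is positive definite and hence invertible, so the standard Gaussian conditioning identity applies:
\[
\E[X_\miss \mid X_\obs] = \mu_\miss + \Sigma_{\miss,\obs}\,\Sigma_{\obs,\obs}^{-1}\,(X_\obs - \mu_\obs).
\]
Substituting this into $\xi_\miss$ gives $\xi_\miss = \theta_\miss^\top \Sigma_{\miss,\obs}\Sigma_{\obs,\obs}^{-1} X_\obs + \big(\theta_\miss^\top\mu_\miss - \theta_\miss^\top\Sigma_{\miss,\obs}\Sigma_{\obs,\obs}^{-1}\mu_\obs + \alpha_m\big)$.

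Next I would identify the two pieces. The part depending on $X_\obs$ is $\theta_\miss^\top \Sigma_{\miss,\obs}\Sigma_{\obs,\obs}^{-1} X_\obs$; this is a scalar, so transposing and using $\Sigma_{\miss,\obs}^\top = \Sigma_{\obs,\miss}$ together with the symmetry of $\Sigma_{\obs,\obs}^{-1}$ rewrites it as $\big(\Sigma_{\obs,\obs}^{-1}\Sigma_{\obs,\miss}\theta_\miss\big)^\top X_\obs = \Delta_\obs^\top X_\obs$ with $\Delta_\obs = \Sigma_{\obs,\obs}^{-1}\Sigma_{\obs,\miss}\theta_\miss$, exactly the claimed form. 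The remaining terms contain no $X_\obs$; collecting them as $C_m := \theta_\miss^\top\mu_\miss - \theta_\miss^\top\Sigma_{\miss,\obs}\Sigma_{\obs,\obs}^{-1}\mu_\obs + \alpha_m \in \R$ and adding back $\theta_\obs^\top X_\obs$ yields $f^*_m = (\theta_\obs + \Delta_\obs)^\top X_\obs + C_m$, which is the statement.

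I do not anticipate a genuine obstacle: the argument is one application of the Gaussian conditional-mean formula followed by routine bookkeeping. The only points needing a line of care are (i) verifying that $\Sigma_{\obs,\obs}$ is invertible so the conditioning formula is well defined (immediate from positive definiteness of $\Sigma$; the degenerate case is covered by the ridge remark preceding the proposition), and (ii) the transpose/symmetry manipulation that converts $\theta_\miss^\top\Sigma_{\miss,\obs}\Sigma_{\obs,\obs}^{-1}$ into the stated closed form for $\Delta_\obs$. For a fully self-contained writeup one would additionally re-derive \eqref{eq:pre_derivation} by applying linearity to \eqref{eq:main_dgp_lin} and using independence of $\epsilon$, but since that identity is already established in the text I would simply invoke it.
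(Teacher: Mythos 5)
Your proposal is correct and follows essentially the same route as the paper's proof: apply the Gaussian conditional-mean formula to $\E[X_\miss \mid X_\obs]$, substitute into the expression from \eqref{eq:pre_derivation}, and separate the affine result into the linear term $\Delta_\obs^\top X_\obs$ and the constant $C_m$. Your explicit transpose/symmetry step relating $\theta_\miss^\top\Sigma_{\miss,\obs}\Sigma_{\obs,\obs}^{-1}$ to $\Sigma_{\obs,\obs}^{-1}\Sigma_{\obs,\miss}\theta_\miss$ is a small bookkeeping detail the paper glosses over, but there is no substantive difference.
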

\begin{proof}
By properties of the multivariate Normal distribution, we have that 
\begin{align*}
    & \mathbb{E}_{X_{\miss}}[ X_{\miss} \mid X_\obs] \\ 
    & = \mathbb{E}[X_{m}]  + \Sigma_{m, \obs} {\Sigma^{-1}_{\obs, \obs}} ( X_\obs - \mathbb{E}[X_\obs] )
\end{align*}
and as a result, following the reasoning above, 
\begin{align*}
    & \mathbb{E}[Y \mid X_\obs]\\
    &  = (\theta_\obs + (\Sigma_{\miss, \obs} {\Sigma^{-1}_{\obs, \obs}}) \theta_{m})^\top X_\obs + C_m \\
    & = (\theta_\obs + \Delta_m)^\top X_\obs + C_m, 
\end{align*}
    where $C_m = \theta_m^\top (\mathbb{E}[X_{m}] - \Sigma_{\miss, \obs} {\Sigma^{-1}_{\obs, \obs}}\E[X_\obs]) + \alpha_m $, 
    which is constant w.r.t. $X_\obs$.
\end{proof}

\subsection{Sparsity in optimal model}
\begin{thmprop*}[Proposition~\ref{prop:sparsity} restated]
Suppose that a covariate $j\in [d]$ is observed under pattern $m$, i.e., $m_j = 0$, and assume that $X_j$ is non-interactive with every covariate $X_{j'}$ that is missing under $m$. Then $(\Delta_\obs)_j = 0$. %
\end{thmprop*}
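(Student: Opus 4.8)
The plan is to start from the closed form $\Delta_\obs = \Sigma^{-1}_{\obs,\obs}\Sigma_{\obs,\miss}\theta_\miss$ supplied by Proposition~\ref{prop:main} and rewrite the matrix $\Sigma^{-1}_{\obs,\obs}\Sigma_{\obs,\miss}$ in terms of the precision matrix $S = \Sigma^{-1}$, whose entries encode the non-interactivity hypothesis directly. Partitioning $\Sigma$ into blocks indexed by the variables observed and missing under $m$, the standard block-inversion formula (equivalently, the Schur-complement identity) gives
$$
S_{\obs,\miss} \;=\; -\,\Sigma^{-1}_{\obs,\obs}\,\Sigma_{\obs,\miss}\,\big(\Sigma_{\miss,\miss}-\Sigma_{\miss,\obs}\Sigma^{-1}_{\obs,\obs}\Sigma_{\obs,\miss}\big)^{-1},
$$
where the Schur complement $\Sigma_{\miss\mid\obs} \coloneqq \Sigma_{\miss,\miss}-\Sigma_{\miss,\obs}\Sigma^{-1}_{\obs,\obs}\Sigma_{\obs,\miss}$ is invertible because $\Sigma$ is assumed invertible (indeed it is the conditional covariance of $X_\miss$ given $X_\obs$). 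Rearranging yields the clean identity $\Sigma^{-1}_{\obs,\obs}\Sigma_{\obs,\miss} = -\,S_{\obs,\miss}\,\Sigma_{\miss\mid\obs}$, and hence $\Delta_\obs = -\,S_{\obs,\miss}\,\Sigma_{\miss\mid\obs}\,\theta_\miss$.

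From here the conclusion is immediate: the $j$-th row of $S_{\obs,\miss}$ consists of the entries $S_{j,j'}$ with $j'$ ranging over the variables missing under $m$, and by the non-interactivity hypothesis together with the recalled fact that for Gaussian $X$ non-interactivity of $X_j$ and $X_{j'}$ is equivalent to $S_{j,j'}=0$, this entire row vanishes. Therefore the $j$-th coordinate of $\Delta_\obs = -\,S_{\obs,\miss}\,\Sigma_{\miss\mid\obs}\,\theta_\miss$ is zero, i.e.\ $(\Delta_\obs)_j = 0$, regardless of the values of $\theta_\miss$ and of the conditional covariance.

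I would also record the probabilistic reading that makes the statement transparent: $(\Delta_\obs)_j = 0$ says that the optimal linear imputation of the missing block places zero weight on $X_j$, which is exactly the statement $X_j \perp X_\miss \mid X_{\obs\setminus\{j\}}$ — a consequence of the local Markov property of the Gaussian graphical model once all neighbors of $j$ in the precision graph lie among the observed variables. The only point requiring care — and the main (minor) obstacle — is the bookkeeping of the block decomposition: the observed/missing partition of indices must be applied consistently to $\Sigma$, $S$, and $\theta$, so that ``the $j$-th row of $S_{\obs,\miss}$'' genuinely corresponds to covariate $j$ and to the missing columns. Once the indexing conventions of Proposition~\ref{prop:main} are fixed, no real computation remains.
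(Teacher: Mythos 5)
Your proof is correct and takes essentially the same route as the paper: both reduce the claim to the identity $\Delta_\obs = -S_{\obs,\miss}\,\Sigma_{\miss\mid\obs}\,\theta_\miss$ and read off that the $j$-th row of $S_{\obs,\miss}$ vanishes under the non-interactivity hypothesis. The only difference is packaging --- you invoke the block-inversion/Schur-complement formula explicitly, whereas the paper multiplies $\Sigma$ against the stacked vector $(\Delta_\obs, -\theta_\miss)$ to produce a vector supported on the missing block (its $g_m$ is exactly your $-\Sigma_{\miss\mid\obs}\theta_\miss$) and then applies $S$.
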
%
\begin{proof}
Let $\bartheta_\obs = \theta_\obs + \Delta_\obs$. Recall that $S = \Sigma^{-1}$ is the precision matrix for $X \sim \mathcal{N}(\boldsymbol{\mu}, \Sigma)$ and permute the rows and columns of $\Sigma$ into observed and unobserved parts, such that, without loss of generality, we can write 
\begin{equation}
    \Sigma=\left[\begin{array}{cc}
         \Sigma_{\obs, \obs} & \Sigma_{\obs,\miss} \\
         \Sigma^T_{\obs,\miss} & \Sigma_{\miss,\miss} 
    \end{array}\right]~.
\end{equation}
Note that by the definition of $\Delta_\obs$ (Proposition~\ref{prop:main}),
\begin{align*}
    \Sigma\left[\begin{array}{c}
         \Delta_\obs  \\
          -\bartheta_\obs
    \end{array}\right] 
    & =\left[\begin{array}{cc}
         \Sigma_{\obs,\obs} & \Sigma_{\obs,\miss} \\
         \Sigma^T_{\obs,\miss} & \Sigma_{\miss,\miss} 
    \end{array}\right]\left[\begin{array}{c}
         \Delta_\obs  \\
          -\bartheta_\obs
    \end{array}\right] \\ 
    & =\left[\begin{array}{c}
          0 \\
         g_m 
    \end{array}\right], 
\end{align*}
where $g_m$ is a suitable vector. Hence
\begin{equation*}
    \left[\begin{array}{c}
         \Delta_\obs  \\
          -\bartheta_\obs
    \end{array}\right]=\Sigma^{-1}\left[\begin{array}{c}
          0 \\
         g_m 
    \end{array}\right]=S\left[\begin{array}{c}
          0 \\
         g_m 
    \end{array}\right]
\end{equation*}
We conclude the result by noting that $(\Delta_\obs)_j$ is zero if in the $j$th row of $S$, all entries corresponding to the unobserved part is zero.
\end{proof}

\subsection{Consistency in linear-Gaussian DGPs}
\label{app:consistency_lin}

\begin{thmthm*}[Theorem~\ref{thm:consistency} restated]
Suppose that Condition~\ref{cond:lineargauss} holds with parameters $(\theta, \{\Delta_\obs\})$ as in Proposition~\ref{prop:main}, such that, for each covariate $j$, the number of patterns $m$ for which $m_j = 0$ and $(\Delta_\obs)_j = 0$ is strictly larger than the number of patterns $m'$ for which $m'_j=0$ and $(\Delta_{\obs'})_j \neq 0$. 
Then, with $\gamma=0$ and $\lambda >0$, the true parameters $(\theta, \{\Delta_\obs\})$ are  the unique  solution to \eqref{eq:spsm_ols} in the large-sample limit,  $n \rightarrow \infty$. 
\end{thmthm*}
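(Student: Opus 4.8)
The plan is to run a standard argmin‑consistency argument, refining the general statement of Proposition~\ref{prop:consistency} so that it additionally pins down the \emph{decomposition} of the limit into shared coefficients $\theta$ and specializations $\{\Delta_\obs\}$. The key structural observations are that, with $\gamma=0$, (i) the empirical risk in \eqref{eq:spsm_ols} depends on the parameters only through the per‑pattern sums $s_m \coloneqq \theta_\obs + \Delta_\obs$, so it cannot by itself distinguish different decompositions of a given family $\{s_m\}$, and (ii) the only term that does distinguish them, $\sum_m \tfrac{\lambda_m}{n_m}\|s_m-\theta_\obs\|_1$, vanishes as $n\to\infty$ but does so at a slower rate than the risk grows away from its minimizer, so it still breaks ties. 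Concretely I would reparametrize the objective in $(\theta,\{s_m\})$ (keeping unpenalized pattern‑specific intercepts so the model matches Proposition~\ref{prop:main}, including its constant $C_m$); for fixed $\{s_m\}$ the penalty decouples over coordinates $j$, and the minimizing $\theta_j$ is a weighted median of $\{(s_m)_j : m_j=0\}$ with weights $\lambda_m/n_m$, yielding a profiled objective in $\{s_m\}$ alone. (Coordinates observed under no pattern play no role and can be dropped from $\theta$.)

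\emph{Step 1: identifying the sums.} By a law of large numbers, uniformly on compacts, $R_n(\{s_m\}) \to R_\infty(\{s_m\}) \coloneqq \sum_m \Pr(M=m)\,\E\big[(s_m^\top X_\obs - Y)^2 \mid M=m\big]$, and the profiled penalty is nonnegative and $\to 0$ on compacts. The limit $R_\infty$ is a strictly convex quadratic in $\{s_m\}$, since $\Sigma \succ 0$ and $\Pr(M=m)>0$ force $\E[X_\obs X_\obs^\top \mid M=m] \succ 0$; its unique minimizer is the vector of per‑pattern population least‑squares slopes, which by Proposition~\ref{prop:main} equals the true sums $s_m^\star = \theta_\obs + \Delta_\obs$. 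Together with coercivity of the objective — $R_n$ is coercive in the $\{s_m\}$ directions and the penalty grows linearly along any shift of a $\theta_j$ that is observed somewhere — this shows the minimizers $(\hat\theta_n, \{\hat\Delta_{n,m}\})$ are bounded and that $\hat s_{n,m} = \hat\theta_{n,\obs} + \hat\Delta_{n,m} \to s_m^\star$ for every $m$.

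\emph{Step 2: breaking the tie.} With the sums pinned down, $\hat\theta_{n,j}$ is the weighted median of $\{(\hat s_{n,m})_j : m_j=0\}$, which by Proposition~\ref{prop:main} converges to a weighted median of $\{\theta_j + (\Delta_\obs)_j : m_j=0\}$. This set equals $\theta_j$ for exactly the observed patterns with $(\Delta_\obs)_j=0$ and differs from $\theta_j$ otherwise; the hypothesis of the theorem — a strict majority of the patterns observing $j$ have $(\Delta_\obs)_j=0$ — makes $\theta_j$ the unique (weighted) median, and, since the majority is strict, this median is stable under the vanishing perturbation $\hat s_{n,m} \to s_m^\star$. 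Hence $\hat\theta_{n,j} \to \theta_j$ and $\hat\Delta_{n,m} = \hat s_{n,m} - \hat\theta_{n,\obs} \to \Delta_\obs$ with the correct support. Since every subsequential limit is $(\theta,\{\Delta_\obs\})$, the minimizers converge to it; strict convexity of the profiled objective near this point (once the empirical second moments are positive definite and the median is non‑degenerate) gives uniqueness of the minimizer for large $n$.

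\emph{Main obstacle.} The delicate part is Step 2: making rigorous the claim that a penalty which vanishes in the limit nevertheless selects a specific decomposition. The cleanest route is an epi‑convergence (argmin‑consistency) argument that treats the penalty as a lower‑order term on the affine subspace $\{s_m = s_m^\star\}$, combined with a subgradient characterization of the weighted‑median condition — it is precisely the \emph{strict} inequality in the hypothesis that both rules out alternative medians and forces uniqueness. One subtlety to be upfront about is the weighting: the bare ``count of patterns'' condition in the statement corresponds to a weighted‑median condition in which the effective weights $\lambda_m/n_m$ are comparable across patterns (e.g., roughly balanced patterns, or $\lambda_m \propto n_m$); the argument should either invoke this or be stated directly in terms of the effective weights $\lambda_m/\Pr(M=m)$.
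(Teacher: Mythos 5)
Your proposal is correct and follows essentially the same route as the paper's proof: the risk term asymptotically pins down the per-pattern sums $\theta_\obs+\Delta_\obs$ (the paper's reduction to problem \eqref{eq:L1_decomposition}), and the vanishing $\ell_1$ penalty then selects the decomposition, with your weighted-median/subgradient characterization of the profiled $\theta_j$ being exactly what the paper's triangle-inequality computation over the sets $I_j$ and $I_j^c$ establishes. The one substantive point you add is the observation that the effective tie-breaking weights are $\lambda_m/n_m$ rather than uniform, so the count-based majority hypothesis implicitly requires these weights to be comparable (e.g.\ $\lambda_m\propto n_m$ or balanced patterns); the paper's proof silently drops these weights when passing to the unweighted objective $\sum_m\|\Delta'_\obs\|_1$, so flagging this is a genuine refinement rather than a discrepancy.
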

\begin{proof}
Consider the optimization problem in eq. \eqref{eq:spsm_ols} with $\gamma=0$.
In the large-sample limit $(n \rightarrow \infty)$, minimizers of the empirical risk over $n$ samples will also minimize the expected risk and, since the outcome is linear-Gaussian, satisfy the constraint in eq. \eqref{eq:L1_decomposition}. 
Then, solving \eqref{eq:spsm_ols} is equivalent to solving the following problem:
\begin{align}
& \underset{\theta', \{\Delta'_\obs\}}{\text{minimize}}
& & \suml_{m}\|\Delta'_\obs\|_1 \label{eq:L1_decomposition} \\
& \text{subject to}
& & \theta_\obs'+\Delta_\obs'=\theta_\obs + \Delta_\obs,\quad m \in \mathcal{M} \nonumber
\end{align}
Many parameters $(\theta', \Delta')$ can satisfy the constraint, due to translational invariance. However, for any value of $\lambda >0$, regularization in \eqref{eq:spsm_ols} steers the solution towards the one with the smallest norm, $\|\Delta'\|_1$.
The reasoning is similar to the  argument in the proof of  Proposition~\ref{prop:consistency}, adding the assumption that the true system is linear-Gaussian.  Under the added assumptions of Theorem~\ref{thm:consistency}, we can now prove that we also get the correct decomposition.

Take a solution  $\theta^*, \{\Delta^*_\obs\}$ of \eqref{eq:L1_decomposition}. For simplicity of notation below, let vectors $\theta, \Delta_\obs, \theta^*, \Delta^*_\obs$ always be indexed such that the same index $j$ refers to coefficients corresponding to the same covariate $X_j$. Next, define $I_j=\{m \mid m_j=0,\ (\Delta_\obs)_j=0\}$ to be the set of patterns where covariate $j$ is observed and without specialization under the optimal model. Similarly, define $I^c_j=\{m \mid m_j=0,\ (\Delta_\obs)_j\neq 0\}$ to be the set of patterns where covariate $j$ is observed and needs specialization. First, note that 
\begin{align*}
    \suml_m\|\Delta^*_\obs\|_1 & =\suml_j\suml_{m \mid m_j=0}\left|(\Delta^*_\obs)_j\right| \\
    & =\suml_j\suml_{m\in I_j}\left|(\Delta^*_\obs)_j\right|+\suml_j\suml_{m\in I^c_j}\left|(\Delta^*_\obs)_j\right|.
\end{align*}
For $m\in I_j$, we have $\theta^*_j+(\Delta^*_\obs)_j=\theta_j$. Hence
\begin{equation}
    \suml_j\suml_{m \in I_j}\left|(\Delta^*_\obs)_j\right|=\suml_j|I_j||\theta_j-\theta_j^*|
\end{equation}
For $m\in I^c_j$, we have $\theta^*_j+(\Delta^*_\obs)_j=\theta_j+(\Delta_\obs)_j$ and hence by the triangle inequality, we have
\begin{eqnarray}
    &\suml_j\suml_{k\in I^c_j}\left|(\Delta^*_k)_j\right|\geq\suml_j\suml_{k\in I^c_j}\left(\left|(\Delta_k)_j\right|-|\theta_j-\theta_j^*|\right)=\nwl
    &\suml_m\|\Delta_\obs\|_1-\suml_j|I_j^c||\theta_j-\theta_j^*|
\end{eqnarray}
We conclude that
\begin{align*}
    \suml_m\|\Delta^*_\obs\|_1 & \geq \suml_m\|\Delta_\obs\|_1+\suml_j(|I_j|-|I_j^c|)|\theta_j-\theta_j^*| \\
    & \geq \suml_m\|\Delta_\obs\|_1
\end{align*}
where the last inequality is by the assumption. This provides the desired result.
\end{proof}

\section{Experiment details}
\subsection{Real world data sets}
\paragraph{ADNI}
The compiled data set includes 1337 subjects that were preprocessed by one-hot encoding of the categorical features and standardized for the numeric features. The processed data has 37 features and 20 unique missingness patterns. The label set is quite unbalance showing 1089 patients who do not change from their baseline diagnosis, and 248 do. The regression task targets predicting the result of the cognitive test ADAS13 (Alzheimer's Disease Assessment Scale) at a 2 year follow-up~\citep{mofrad2021cognitive} based on available data at baseline.

\paragraph{SUPPORT}
The data set contains 9104 subjects represented by 23 unique missingness pattern. 
The following 10 covariates were selected and standardized: partial pressure of oxygen in the arterial blood (pafi), mean blood pressure, white blood count, albumin, APACHE III
respiration score, temperature, heart rate per minute, bilirubin, creatinine, and sodium.

\subsection{Details of the baseline methods}\label{app:B}
We compare to the following baseline methods: 
\begin{itemize}
    \item[]  \textbf{Imputation + Ridge / logistic regression (\ridge/\lr)} the data is first imputed (see below) and a ridge or logistic regression is fit on the imputed data. The implementation in SciKit-Learn was used~\citep{Pedregosa}.
    The ridge coefficients are shirked by imposing a penalty on their size. They are a reduced factor of the simple linear regression coefficients and thus never attain zero values but very small values~\citep{tibshirani1996regression}
    \item[] \textbf{Imputation + Multilayer perceptron (\mlp)}: The MLP estimator is based on a single hidden layer of size $\in [10, 20, 30]$ followed by a ReLu activation function and a softmax layer for classification tasks and a linear layer for regressions tasks. As input, the imputed data is concatenated with the missingness mask. The MLP is trained using ADAM~\citep{kingma2014adam}, and the learning rate is initialized to constant ($0.001$) or adaptive. We use the implementation in SciKit-Learn~\citep{scikit-learn}.
    \item[] \textbf{Pattern submodel (\psm)}: For each pattern of missing data, a linear or logistic regression model is fitted, separately regularized with a $\ell_2$ penalty. Following \citet{fletcher2020missing}, for patterns with fewer than $2*d$ samples available, a complete-case model (CC) is used.
    %also to combat vairance
    Our implementation of \psm{} is based on a special case of our \spsm{} implementation where regularization is applied over all patterns and not in each pattern separately. To enforce fitting separated submodels for each pattern, we set $\gamma = 1e^8$ and $\lambda = 0$.
    \item[] \textbf{XGBoost (\xgb)}: XGBoost is an implementation of gradient boosted decision trees. Note, XGBoost supports missing values by default~\citep{chen2019package}, where branch directions for missing values are learned during training. A logistic classifier is then fit using XGBClassifier while regression tasks are trained with the XGBRegressor~\citep{scikit-learn}. We set the hyperparameters to 100 for the number of estimators used, and fix the learning rate to 1.0. The maximal depth of the trees is $\in [5,10,15]$.
\end{itemize}

Imputation methods and hyperparameters for all methods were selected based on the validation portion of random 64/16/20 training/validation/test splits. 
Results were averaged over five random splits of the data set.   
The performance metrics for classification tasks were accuracy and the Area Under the ROC Curve (AUC). For regression tasks, we use the mean squared error (MSE) and 
%the general coefficient of determination, 
the R-square, ($R^2$) value, representing the proportion of the variance for a dependent variable that's explained by an independent variable, taking values in [$- \infty$, 1] where negative values represent predictions worse than the mean \citep{dancer2005r}. Confidence intervals at significance level $\alpha=0.05$ are computed based on the number of test set samples. 
For accuracy, MSE and $R^2$ we use a Binomial proportion confidence interval~\citep{fagerland2015recommended} and for AUC we use the classical model of~\citep{HanleyMcNeil}.

\paragraph{Computing Infrastructure}
The computations required resources of 4 compute nodes using two Intel Xeon Gold 6130 CPUS with 32 CPU cores and 384 GiB memory (RAM). Moreover, a local disk with the type and size of SSSD 240GB with a local disk, usable area for jobs including 210 GiB was used. 
Inital experiments are run on a Macbook using macOS Montery with a  2,6 GHz 6-Core Intel Core i7 processor.

\section{Additional experimental results}
\subsection{Simulation results}~\label{app:simu_resul}
Results for synthetic data with missingness Setting B (pattern-dependent) and Setting C (MCAR) can be found in Figures~\ref{fig:synth_exp_B} and~\ref{fig:synth_exp_C}, respectively.
\begin{figure}[t]
\centering
\includegraphics[width=0.8\columnwidth]{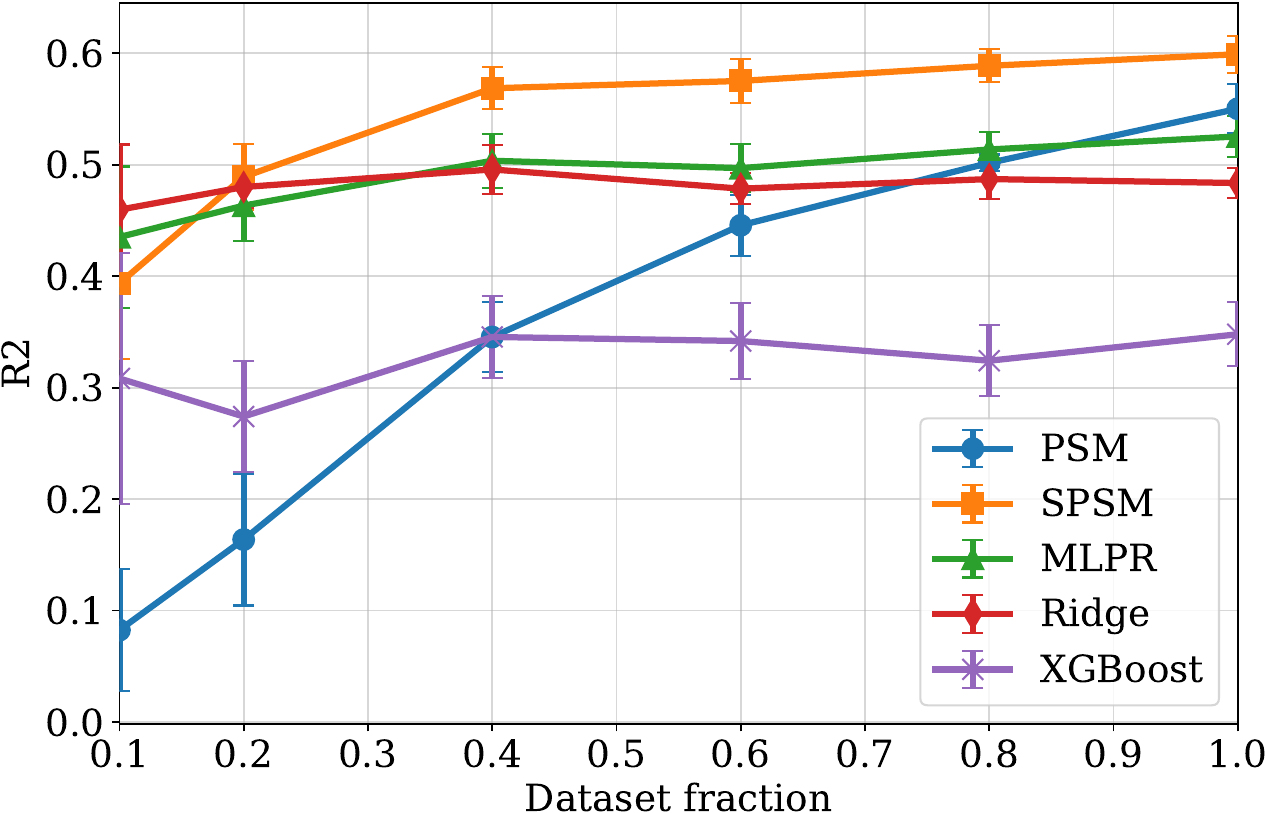} 
\caption{Performance on simulated data Setting B. Error bars indicate standard deviation over 5 random data splits. The complete data set has $n=10000$ samples.}
\label{fig:synth_exp_B}
\end{figure}

\begin{figure}[t]
\centering
\includegraphics[width=0.8\columnwidth]{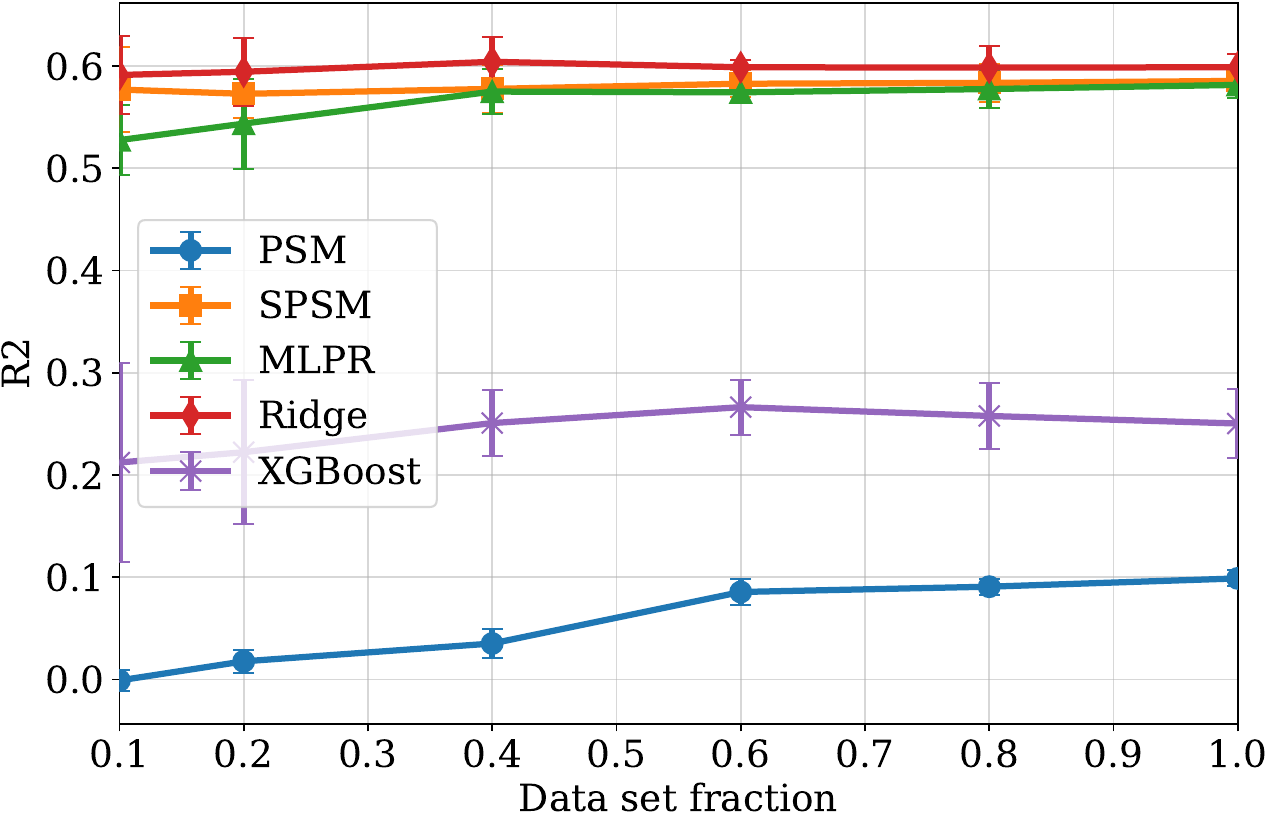} 
\caption{Performance on simulated data Setting C (MCAR). Error bars indicate standard deviation over 5 random data splits. The complete data set has $n=10000$ samples.}
\label{fig:synth_exp_C}
\end{figure}

\subsection{Results for ADNI and SUPPORT}
A figure illustrating the performance on SUPPORT with varying data set size is given in Figure~\ref{fig:SUPPORT_fraction_reg}.
Table~\ref{tab:appendix_data_results_linear_MSE} presents the MSE score as an additional performance metric for the regression tasks using ADNI and SUPPORT data. 
For the MAR setting in the SUPPORT data, we present the results for classification and regression tasks in Table~\ref{tab:appendix_non_Mnar_classifier} and Table~\ref{tab:appendix_data_results_regession_non_mnar}. Moreover, the full table of pattern 4 non-zero coefficients with the corresponding missing features is displayed in Table~\ref{tab:Full_table_all}. 

\begin{figure}[t]
    \centering
    \includegraphics[width=.8\columnwidth]{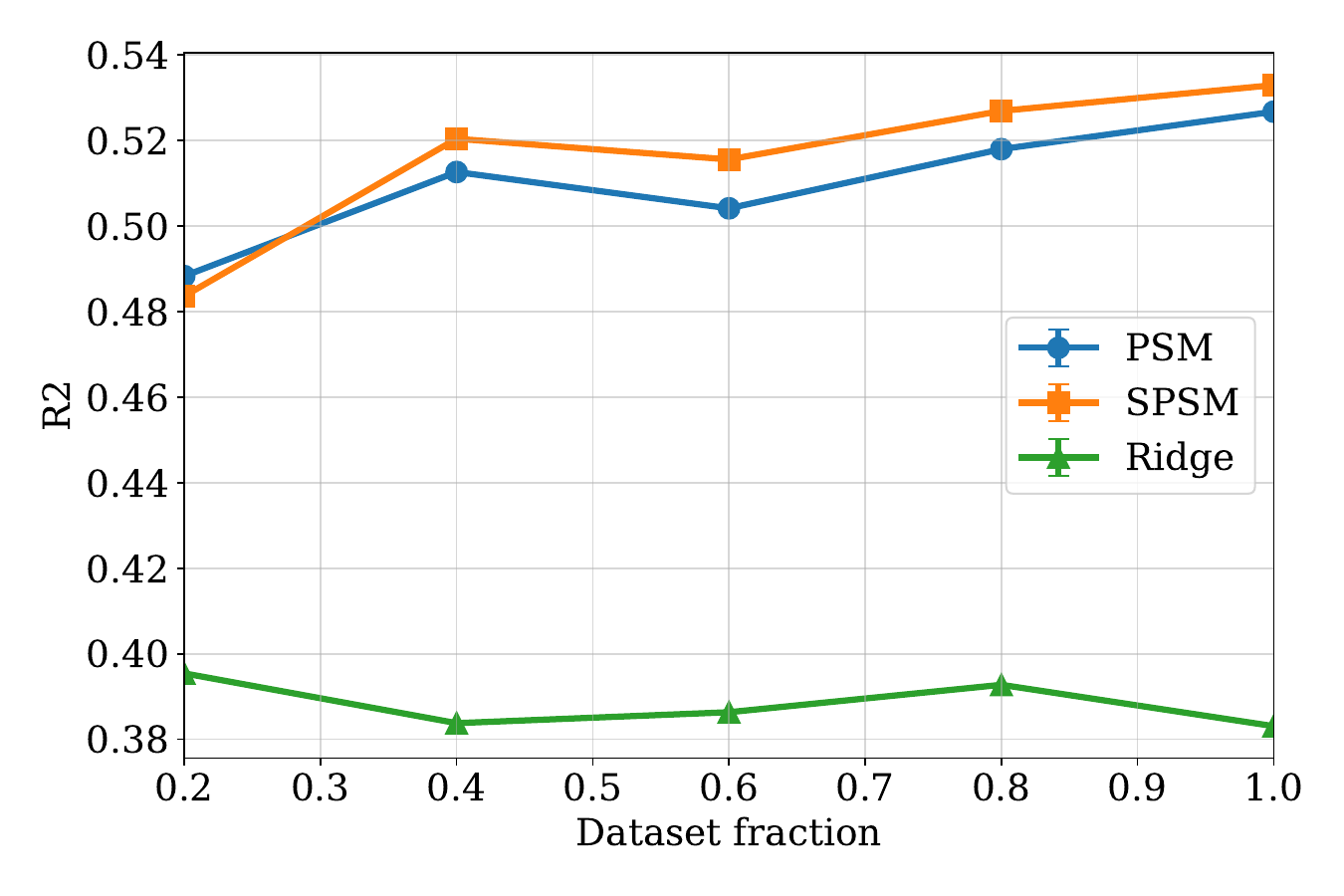}
    \caption{Performance on SUPPORT data for a regression task. Error bars indicate standard deviation over 5 random subsamples of the data.}
    \label{fig:SUPPORT_fraction_reg}
\end{figure}

 \begin{table}[t!]
    \centering
    \begin{tabular}{rlll}
      & \bfseries \makecell{Linear \\ Methods} & \bfseries MSE \\
      \hline
        \multicolumn{3}{l}{\emph{ADNI}} \\
        \hline
        & \ridge{}, $I_0$ & 0.36  (0.26, 0.46) \\
        & \xgb{}, $I_\mu$  &0.60 (0.48, 0.74)  \\
        & \mlp{}, $I_\mu$ & 0.37 (0.27, 0.47)  \\
        &\psm{} & 0.50 (0.38, 0.62)  \\
        & \spsm{}  & 0.35 (0.25, 0.45) \\
      \hline
        \multicolumn{3}{l}{\emph{SUPPORT}} \\
        \hline
        &\ridge{},$I_0$  & 0.61 (0.56, 0.66) \\
        & \xgb{}, $I_\mu$   & 0.69 (0.63, 0.75)\\
        & \mlp{}, $I_0$  & 0.44 (0.39, 0.48) \\
        & \psm{}  & 0.47 (0.42, 0.52) \\
        & \spsm{}  & 0.47 (0.42, 0.51) \\
    \end{tabular}
    \caption{Experimental results of regression methods for ADNI and SUPPORT data set.}\label{tab:appendix_data_results_linear_MSE}
\end{table}

\begin{table}[t]
    \centering
    \begin{small}
    \begin{tabular}{rlcc}
      \multicolumn{2}{l}{\bf Regressions} & $\boldsymbol{R}^2$ & \bfseries MSE\\
        \hline
        \multicolumn{3}{l}{SUPPORT} \\
        \hline
        & \ridge{},  $I_0$ & 0.38 (0.34, 0.41)& 0.62 (0.57, 0.67)\\
        &\xgb{}, $I_\mu$  &  0.27 (0.23, 0.31) & 0.73 (0.67, 0.78) \\
        & \mlp{},  $I_0$ & 0.55 (0.52, 0.58) & 0.45 (0.40, 0.49)\\
        & \psm{} & 0.51 (0.48, 0.54) & 0.49 (0.44, 0.53)\\
        & \spsm{} & 0.52 (0.49, 0.58) & 0.47 (0.42, 0.51)\\
    \end{tabular}
    \end{small}
    \caption{Experimental results of regression methods for SUPPORT data set MAR. }\label{tab:appendix_data_results_regession_non_mnar}
\end{table}

\begin{table}
    \centering
    \begin{small}
    \begin{tabular}{rlcc}
      \multicolumn{2}{l}{\bfseries Classifiers}  & \bfseries AUC & \bfseries Accuracy \\
        \hline
        \multicolumn{3}{l}{SUPPORT} \\
        \hline
        & \lr{}, $I_0$ &  0.82 (0.80, 0.84) &  0.75 (0.72, 0.78) \\
        & \xgb{}, $I_0$ & 0.83 (0.81, 0.85)  & 0.76 (0.74, 0.78) \\
        & \mlp{}, $I_0$ & 0.85 (0.84, 0.87) & 0.78 (0.76, 0.81)\\
        & \psm{} & 0.83 (0.81, 0.85) & 0.78 (0.74, 0.80) \\
        & \spsm{}  & 0.83 (0.81, 0.85)  & 0.76 (0.73, 0.80)\\
    \end{tabular}
    \end{small}
    \caption{Experimental results of classifiers for SUPPORT data with MAR.}\label{tab:appendix_non_Mnar_classifier}
\end{table}

\begin{table}[t]
    \centering
    \begin{small}
    \begin{tabular}{l|ll|l}
    \multicolumn{3}{l}{ \makecell{Missing features in pattern 0:\\ None} } \\
    \bfseries Feature & \bfseries \makecell{$\Delta_m$}  &  \bfseries \makecell{$\theta$} & \bfseries $\theta$ + $\Delta_m$ \\ 
      \hline
        Age &-0.038 & 0.121 & 0.082 \\
 EDUCAT    &    0.014 & -0.005 & 0.009 \\
  APOE4      &       0.046  &  -0.010  &  0.035 \\
   FDG        &       -0.032  &  -0.039  &  -0.071 \\
  ABETA    &     0.027  & -0.000  &  0.027\\
 LDELTOTAL     &    0.051  &  -0.391  &  -0.340 \\
0  Entorhinal     &   0.007  &  -0.131  &  -0.124\\
 ICV           &    0.013  &  0.093  &  0.106\\
 Diagnose MCI    &    0.078  & -0.139  &  -0.061\\
 GEN Female  & -0.054& 0.003  &  -0.050\\
  GEN Male  &   0.000  &  0.062  &  0.062\\
  \makecell{Not Hisp/ \\Latino} &  0.047  &  -0.114  &  -0.067\\
  Married &  0.115  &  -0.159  &  -0.044\\
        \hline
    \multicolumn{3}{l}{ \makecell{Missing features in pattern 1:\\ FDG} } \\
      \hline
       Age &-0.052 & 0.121 & 0.069 \\
        \hline
      \multicolumn{3}{l}{ \makecell{Missing features in pattern 4:\\ ABETA, TAU and PTAU at baseline (bl)} } \\
      \hline
          Age &-0.140 & 0.121 & -0.019 \\
       FDG  & -0.090 & -0.039  & -0.129\\
        Whole Brain  & 0.000  & -0.045 & -0.044\\
        Fusiform  & 0.016 & 0.021  & 0.037 \\
         ICV &  0.001 & 0.093 & 0.094 \\
    \multicolumn{3}{l}{ \makecell{Missing features in pattern 10:\\ FDG, ABETA (bl), TAU (bl), PTAU (bl)} } \\
      \hline
        APOE4  & 0.038 &-0.010 &0.027\\
    \end{tabular}
    \end{small}
    \caption{Full table showing $\Delta_m$ in the regression task using SPSM for ADNI.}\label{tab:Full_table_all}
\end{table}

\begin{table}[t]
    \centering
    \begin{tabular}{rcc}
     {\bfseries \makecell{Pattern\\number}} & \makecell{Number of\\ subjects\\ per pattern} &
     $\boldsymbol{R}^2$\\
      \hline
      0 & 119& 0.64 (0.53, 0.75)\\
      1 & 30 & 0.30 (-0.10, 0.55)\\
      6 & 27 & 0.71 (0.50, 0.92) \\
      10 & 28 & 0.71 (0.50, 0.91)\\
     others & $\leq$ 13 & undefined or insignificant\\
    \end{tabular}
    \caption{A minimum sample size is required for \spsm{} to maintain predictive performance}\label{tab:appendix_sample_sizes}
\end{table}

\begin{figure}[t]
    \centering
    \includegraphics[width=1\columnwidth]{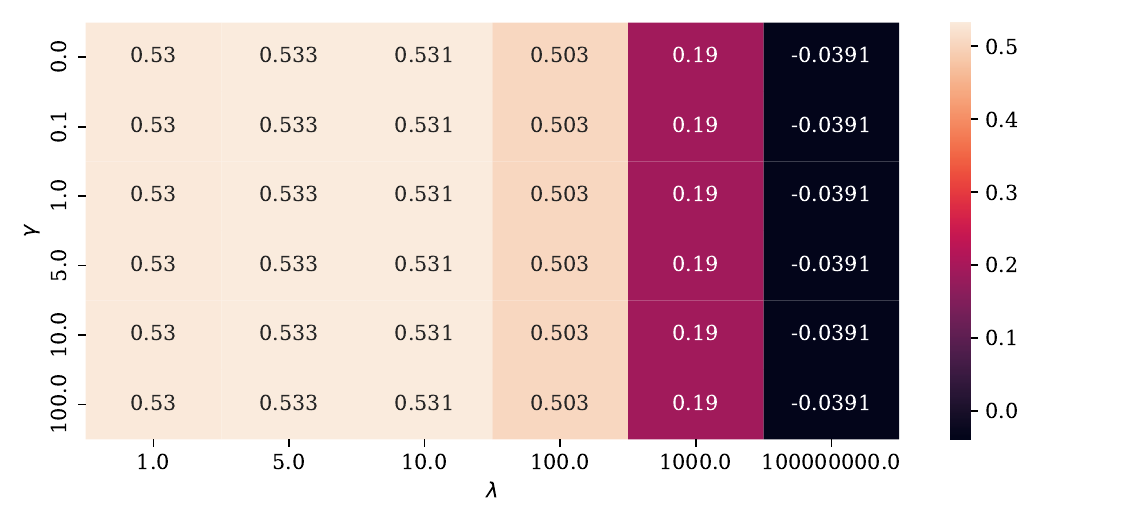}
    \caption{Heatmap visualizing the tradeoff between interpretability and prediction power including different hyperparameter values for $\gamma$ and $\lambda$, expressed by the $R^2$ using SUPPORT data. Each cell is indicating a  $\gamma$,$\lambda$ combination, e.g. 1,100 represents 1 = $\gamma$ and 100 = $\lambda$. }
    \label{fig:tradeoff_interpre_acc}
\end{figure}

\subsection{HOUSING data}\label{appendix:housing}
The Ames Housing data set (HOUSING)~\cite{de2011ames} was compiled by Dean De Cock for use in data science education. The data set describes the sale of individual residential property in Ames, Iowa from 2006 to 2010. The data set contains 2930 observations and a large number of explanatory variables (23 nominal, 23 ordinal, 14 discrete, and 20 continuous) involved in assessing home values. In this study we used a subset of the features 27 features to describe the main characteristics of a house. Examples of features included are measurements about the land ('LotFrontage', 'LotArea', 'LotShape', 'LandContour', 'LandSlope'), the 'Neighborhood', and 'HouseStyle', when the house was build ('YearBuilt'), or remodeled ('YearRemodAdd'). Moreover, features describing the outside of the house ('RoofStyle', 'Foundation'), technical equipment  ('Heating', 'CentralAir', 'Electrical', 'KitchenAbvGr', 'Functional', 'Fireplaces', 'GarageType', 'GarageCars', 'PoolArea',
'Fence', 'MiscFeature'), and information about previous house selling prices and conditions  ('MoSold', 'YrSold', 'SaleType',  'SaleCondition'). 
The numeric features where standardized and the categorical ones are one-hot-encoded during preprocessing. The HOUSING data set shows 15 different missingness patterns.
An exploratory analysis has shown that the house sale prices are somehow skewed, which means that there is a large amount of asymmetry. The mean of the characteristics is greater than the median, showing that most houses were sold for less than the average price. 
In the classification predictions, we look if the sale prices for a house are above or below the median, while for regression tasks we predict the sale price for a house.

We report the results of the HOUSING data set in Table~\ref{tab:appendix_housing}. In classification, on average a high performance over all models, whereas the best performing one, \xgb{} achieves an AUC of 0.96 and an accuracy of 0.91. \spsm{} achieves only slightly lower prediction power of 0.95 AUC and 0.88 accuracies than \xgb{}. While \lr, \xgb{} and \mlp{} depend on mean or zero imputation, \psm and \spsm{} are able to achieve comparable results without adding bias to their prediction with high confidence on average.
For the HOUSING regression, the validation power suggested $\gamma = 10$, $\lambda = 100$ for \spsm{}, resulting in an $R^2$ of 0.64 and an MSE of 0.39. This result is better than for \psm ($R^2$ of 0.58 and MSE of 0.46) and thus demonstrates the benefit of coefficient sharing in \spsm{} compared to no sharing. Although \ridge{}, and \mlp{} perform better the differences are only marginal to \spsm{}. The best performing model is the black-box method of \xgb{} achieving an $R^2$ of 0.76 and MSE of 0.27 indicating the non-linearity of the data set.

\begin{table}[t]
    \centering
    \begin{small}
    \begin{tabular}{rlcc}
      \multicolumn{2}{l}{\bfseries Housing} \\
      \hline
        \multicolumn{2}{c}{\emph{Classification}} &\bfseries AUC & \bfseries Accuracy \\
        \hline
        & \lr{},  $I_\mu$ & 0.96 (0.94, 0.98) & 0.90 (0.85, 0.95) \\
        &\xgb{}, $I_0$ & 0.96 (0.94, 0,98) &  0.91 (0.87, 0.96)  \\
        & \mlp{},  $I_0$ & 0.96 (0.93, 0.98) & 0.90 (0.85, 0.94)\\
        & \psm{} & 0.93 (0.90, 0.96) & 0.88 (0.83, 0.93)\\
        & \spsm{} & 0.95 (0.92, 0.97)&0.88 (0.83, 0.94)\\
      \hline
        \multicolumn{2}{c}{\emph{Regression}}& $\boldsymbol{R}^2$ & \bfseries MSE\\
        \hline
        & \ridge{},  $I_\mu$ & 0.68 (0.62, 0.75) & 0.35 (0.25, 0.44) \\
        &\xgb{}, $I_0$ & 0.76 (0.70, 0.81)  & 0.27 (0.18,0.35)  \\
        & \mlp{},  $I_0$&  0.64 (0.58, 0.71) & 0.39 (0.29, 0.49) \\
        & \psm{} & 0.58 (0.50, 0.65) & 0.46 (0.35, 0.57) \\
        & \spsm{} &0.64 (0.57, 0.71) & 0.39 (0.29, 0.49)\\
    \end{tabular}
    \end{small}
    \caption{Experimental results of classification and regression methods for HOUSING data set. }
    \label{tab:appendix_housing}
\end{table}

\subsection{Analysis of interpretability}\label{app:interpre}
By enforcing sparsity in pattern specialization, we ensure that the resulting subset of features is reduced to relevant differences which will foster interpretability for domain experts; \spsm{} allows for more straight-forward reasoning about the similarity between submodels and the effects of missingness. \citet{lipton2016modeling} provides qualitative design criteria to address model properties and techniques thought to confer interpretability. We will show that \spsm{} satisfies some form of transparency by asking, i.e., \textit{how does the model work?}.
As stated in \citep{lipton2016modeling}, transparency is the absence of opacity or black-boxness meaning that the mechanism by which the model works is understood by a human in some way. We evaluate transparency at the level of the entire model (simulatability), at the level of the individual components (e.g., parameters) (decomposability), and at the level of the training algorithm (algorithmic transparency).
First, simulatability refers to contemplating the entire model at once and is satisfied in \spsm{} by it's nature of a sparse linear model, as produced by lasso regression~\cite{tibshirani1996regression}. Moreover, we claim that \spsm{} is small and simple~\citep{rudin2019stop}, in that we allow a human to take the input data along with the parameters of the model and perform in a reasonable amount of time all the computations necessary to make a prediction in order to fully understand a model. 
The aspect of decomposabilty~\citep{lipton2016modeling} can be satisfied by using tabular data where features are intuitively meaningful. To that end, we use two real-world tabular data sets in the experiments and present the coefficient values for input features in Table~\ref{tab:Coefficient_change}. Moreover, one can choose to display the coefficients in a standardized or non-standardized way to provide even better insights. The comprehension of the coefficients depends also on domain knowledge. 
Finally, algorithmic transparency is given in \spsm{}, since in linear models, we understand the shape of the error surface and have some confidence that training will converge to a unique solution, even for previously unseen test data.
Additionally,~\citet{henelius2017interpreting} claims that knowing interactions between two or more attributes makes a model more interpretable. \spsm{} shows in $\theta + \Delta$ the coefficient specialization between the main model and the pattern-specific model and therefore reveals associations between attributes.
\end{document}